\documentclass[11pt, a4paper, copyright, gdm]{google}

\usepackage[authoryear, sort&compress, round]{natbib}

\title{Refined Detection for Gumbel Watermarking}

\author[1]{Tor Lattimore}

\affil[1]{\thepa{}{}}

\uselogo{} 
\correspondingauthor{lattimore@google.com}

\usepackage{tikz}
\usetikzlibrary{positioning}

\usepackage[plain]{algorithm}
\usepackage{thmtools} 
\usepackage{mathtools}
\usepackage{listings}
\usepackage{mdframed}
\usepackage{nicefrac}
\usepackage{bm}
\usepackage{listings}
\usepackage{inconsolata}
\usepackage{setspace}

\newcommand{\keyword}[1]{\texttt{\color{red!80!black} #1}}

\usepackage[capitalize]{cleveref}

\newcommand{\Gl}{G}
\newcommand{\Gh}{G^{\text{\scriptsize  $\nicefrac{1}{2}$}}}
\newcommand{\Go}{G^{\text{\scriptsize $1$}}}

\newcommand{\E}{\mathbb E}

\newcommand{\bbP}{\mathbb P}
\newcommand{\bbQ}{\mathbb Q}

\newcommand{\cF}{\mathcal F}
\newcommand{\sF}{\mathcal F}

\newcommand{\cU}{\mathcal U}

\renewcommand{\d}[1]{\operatorname{d}\!#1}
\newcommand{\eps}{\varepsilon}
\newcommand{\sind}{\bm 1}

\newcommand{\argmax}{\operatornamewithlimits{arg\,max}}

\newcommand{\KL}{\operatorname{KL}}

\newcommand{\exl}[1]{\stackrel{\mathclap{\text{\tiny \texttt{\color{red!60!black}(#1)}}}}}
\newcommand{\ex}[1]{\texttt{\color{red!60!black}(#1)}}

\theoremstyle{plain}
\newtheorem{theorem}{Theorem}

\newtheorem{lemma}[theorem]{Lemma}
\newtheorem{proposition}[theorem]{Proposition}

\theoremstyle{definition}
\newtheorem{definition}[theorem]{Definition}
\newtheorem{assumption}[theorem]{Assumption}

\newtheorem{example}[theorem]{Example}
\newtheorem{remark}[theorem]{Remark}

\crefname{example}{Example}{Examples}

\begin{abstract}
We propose a simple detection mechanism for the Gumbel watermarking scheme proposed by \cite{AAR23}.
The new mechanism is proven to be near-optimal in a problem-dependent sense among all model-agnostic watermarking schemes under the assumption that
the next-token distribution is sampled i.i.d.
\end{abstract}

\begin{document}
\maketitle

\section{Introduction}
We are interested in the problem of watermarking the output of large language models.
The token space is a finite set $\Sigma = \{1,\ldots,d\}$.
Let $\Sigma^*$ be the space of all finite sequences over alphabet $\Sigma$.
A language model is a function $p : \Sigma^* \to \Delta(\Sigma)$ where $\Delta(\Sigma)$ is the space
of probability distributions over the tokens. Equivalently, $p$ is a probability kernel from $\Sigma^*$ to $\Sigma$.
A sequence of tokens $A_1,\ldots,A_n$ is sampled autoregressively from $p$ in the usual way:
\begin{align*}
A_t \sim p(\cdot | A_1,\ldots,A_{t-1}) = P_t \,.
\end{align*}
Watermarking is a mechanism for sampling from the language model (preferably without distortion) in such a way that the owner
of the model can later distinguish those sequences that have been sampled from the model from those that have not.
The Gumbel watermarking scheme proposed by \cite{AAR23} works as follows (see \cref{alg:water}):
\begin{enumerate}
\item Let \texttt{key} be a secret key known to the language model designer.
\item When sampling $A_t$ from the model, use a hash function to produce a random seed
using \texttt{key} and $A_{t-1},\ldots,A_{t-k}$ where $k$ is some small integer. For example, $k = 3$.
\item Use this random seed to generate pseudorandom uniform variables $(U_{t,a})_{a \in \Sigma}$ in $[0,1]^\Sigma$ and
then select 
\begin{align*}
A_t = \argmax_{a \in \Sigma} \frac{P_t(a)}{-\log(U_{t,a})} = \argmax_{a \in \Sigma} \left[-\log(-\log(U_{t,a})) + \log(P_t(a))\right] \,.
\end{align*}
Setting aside the issue that the $(U_{t,a})$ are only pseudorandom, the conditional law of $A_t$ is $P_t$ by the Gumbel trick and recalling that
$-\log(-\log(U))$ is Gumbel distributed when $U$ is uniformly distributed on $[0,1]$.
\end{enumerate}

\begin{algorithm}[h!]
\centering
\begin{tikzpicture}
\node[draw,fill=white!90!black,text width=12cm] (b) at (0,0) {
\begin{lstlisting}
args: key, $k$, model $p$
for $t = 1$ to $\infty$:
  let $P_t = p(\cdot | A_1,\ldots,A_{t-1})$
  let $U_t = \keyword{rand}(\texttt{key}, A_{t-1}, A_{t-2},\ldots,A_{t-k}) \in [0,1]^\Sigma$
  output token $A_t = \argmax_{a \in \Sigma} P_t(a) / (-\log U_{t,a})$
  if $A_t = \keyword{end-of-message}$: break
\end{lstlisting}
};
\node[draw,fill=yellow!90!black] at (b.north) {\textsc{gumbel watermarking}}; 
\end{tikzpicture}
\caption{The watermarking algorithm accepts a secret key and a model and autoregressively samples
a sequence of tokens until the \keyword{end-of-message} token is sampled.
The \keyword{rand} function uses its input to compute a pseudorandom vector $U_t$ uniformly distributed on $[0,1]^\Sigma$.
}
\label{alg:water}
\end{algorithm}

\paragraph{Detection}
Detection works by noting that, when given the secret key,
the sequence of noise random variables $(U_t)$ can be reconstructed from the text $(A_t)$. Then let $V_t = U_{t,A_t}$.
When the text is not watermarked one would expect $V_1,\ldots,V_n$ to behave like an i.i.d.\ sequence of random variables that are uniformly distributed on $[0,1]$.
On the other hand, when the text is watermarked, then $V_t$ is not uniformly distributed unless $P_t$ is a Dirac.
Hence, the detector can identify the watermark by proving statistically that the observed $(V_t)_{t=1}^n$ were not sampled i.i.d.\ from the uniform distribution.
Importantly, detection is possible without access to the model.
The generic detection mechanism is given in \cref{alg:detect-general}.

\begin{algorithm}[h!]

\centering
\begin{tikzpicture}
\node[draw,fill=white!90!black,text width=13cm] (b) at (0,0) {

\begin{lstlisting}
args: key, $k$, confidence level $\delta$ and $A_1,\ldots,A_n$
set $\epsilon = \frac{\log(1/\delta)}{n}$ and compute $\tau_\star$ so that $\cref{eq:opt}$ holds 
for $t = 1$ to $n$:
  let $U_t = \keyword{rand}(\texttt{key}, A_{t-1}, A_{t-2},\ldots,A_{t-k})$
  let $V_t = U_{t,A_t}$ 
if $\keyword{goodness-of-fit-test}((V_t)_{t=1}^n, \delta)$: return $\keyword{watermarked}$
return $\keyword{unknown}$
\end{lstlisting}

};
\node[draw,fill=yellow!90!black] at (b.north) {\textsc{detection}}; 
\end{tikzpicture}
\caption{The detection algorithm accepts the secret key, the confidence level $\delta \in (0,1)$ that controls the
false positive rate and the sequence of tokens to be evaluated.
The \keyword{goodness-of-fit-test} is a well-chosen statistical test that the $(V_t)_{t=1}^n$ are independent and uniformly distributed on $[0,1]$
at confidence level $\delta$. Various options are available as we discuss in \cref{sec:exp} and \cref{sec:power}.
}
\label{alg:detect-general}
\end{algorithm}

\paragraph{Contribution}
We study the Gumbel watermarking procedure (\cref{alg:water}), but replace the detection test proposed by \cite{AAR23} with a new test that is theoretically more statistically efficient.
We prove nearly matching problem-dependent upper and lower bounds on the number of tokens needed to detect watermarked text. 
Our bounds are in terms of an entropy-like quantity of the distribution of next-token distributions.
The lower bound applies to all watermarking schemes in the same framework (to be explained) and hence the Gumbel scheme combined with the new
detector is (in a certain sense) nearly optimal.

\paragraph{Related work}
There are now a huge variety of schemes for watermarking language models, with a range of desiderata
and analysis objectives \citep{liu2024survey,piet2025markmywords}.
A relatively recent summary of the current state-of-the-art can be found in the review by \cite{ji2025overview}, but the field is growing incredibly rapidly.
Some schemes are distortionary in the sense that they change the distribution of the model \citep{kirchenbauer23a,zhao2023provable,liu2023unforgeable} while
others are non-distortionary \citep{AAR23,dathathri2024scalable}.
Our work is based on the non-distortionary Gumbel watermarking scheme proposed and analysed by \cite{AAR23}.
\cite{AAR23} used a detection mechanism based on a different statistic than the one proposed here and proved that
detection occurs with high probability once $n = \Omega(\log(1/\delta) / \bar H^2)$ with $\bar H$ the average entropy
(see \cref{eq:Hn}).
This is the detection time that seems to be obtained most naturally when the concentration analysis used is
not variance-aware \citep{christ2024undetectable}.
\cite{huang2023towards} theoretically study watermarking procedures that are both distortion free and
distortionary. Perhaps the most relevant of their results is a proof that when $P_t = p$ is constant, then
detection is possible if $n = \Omega(\log(1/H(p))\log(1/\delta) / H(p))$ and that this is optimal. But this result is not obtained
using a simple practical watermarking procedure.
\cite{tsur2025optimized} ambitiously tackle the problem of designing exactly optimal statistical watermarking/detection schemes
but focus principally on the single-shot case where only a single token is sampled from the model, with preliminary results
in the sequential setting with a constant token distribution ($P_t = p\, \forall t$).
Like us, \cite{li2025statistical} also examine the problem of refining the detection of Gumbel watermarking (and the scheme by \cite{kuditipudi2023robust}). They propose a model-agnostic detector that is optimal in an asymptotic minimax framework where $P_t$ is assumed to
satisfy $\max_{a \in \Sigma} P_t(a) \leq 1 - \Delta$ for some user-provided constant $\Delta \in (0,1)$.
On the one hand, they are able to establish a kind of exact optimality. On the other hand, thanks to the minimax nature of their 
optimality definition, it is unclear if their method achieves the same kind of near-optimal problem-dependent detection efficiency
as our proposal.
Our statistical test is inspired by the classical literature on goodness-of-fit tests, and particularly the Anderson--Darling test \citep{AD52}.
Goodness-of-fit tests have been used for Gumbel watermarking by \cite{he2025empirical}.
\cite{li2025robust} also use goodness-of-fit tests for detection. They primarily focus on a more complicated version of the problem where some of the text has been generated by an LLM
but other parts are human-generated. Nevertheless, they prove a kind of optimality in certain scenarios. At the same time, their results seem incomparable to ours.

\paragraph{Notation}
The sequence length $n > 1$ and confidence level $\delta \in (0,1)$ are fixed throughout.
When $\mu$ and $\nu$ are probability measures on the same space and $\mu$ is absolutely continuous with respect to $\nu$, then
the relative entropy and $\chi^2$-divergence between $\mu$ and $\nu$ are
\begin{align*}
\KL(\mu, \nu) = \int \log\left(\frac{\d{\mu}}{\d{\nu}}(x)\right) \d{\mu}(x) \quad \text{and} \quad
\chi^2(\mu, \nu) = \int \left(\frac{\d{\mu}}{\d{\nu}}(x) - 1\right)^2 \d{\nu}(x) \,.
\end{align*}
The uniform probability measure on $[0,1]$ is $\cU([0,1])$, which is also denoted by $\lambda$.
The $\sigma$-algebra generated by $(U_s)_{s=1}^t$ and $(A_s)_{s=1}^t$ is
$\cF_t = \sigma(U_1,A_1,\ldots,U_t,A_t)$.

\paragraph{Null and alternative hypotheses}
Our results on the correctness and detection time for the Gumbel watermarking scheme
depend on two probability measures. Under the null hypothesis $\bbP_0$ we assume that
$(U_t)_{t=1}^n$ and $(A_t)_{t=1}^n$ are independent 
so that $(V_t)_{t=1}^n$ is a sequence of independent uniformly distributed random variables.
On the other hand, under the alternative hypothesis $\bbP_1$ we assume that 
$(U_t)_{t=1}^n$ is a sequence of independent random elements sampled from $\cU([0,1]^\Sigma)$ and
$A_t = \argmax_{a \in \Sigma} P_t(a) / (-\log (U_{t,a}))$ where $P_t = p(\cdot | A_1,\ldots,A_{t-1})$
is the next-token distribution generated by the model.
The expectation with respect to $\bbP_k$ with $k \in \{0,1\}$ is denoted by $\E_k$.
These assumptions are not unusual for the analysis of watermarking schemes \citep{huang2023towards,li2025statistical,dathathri2024scalable}.
Care is needed, however, since the statistical assumptions are not strictly-speaking correct because
the $(U_t)_{t=1}^n$ are generated using pseudorandomness and with seeds constructed from the observed tokens.
The statistical assumption could in principle be justified using cryptographic arguments and with
certain modifications to the procedure for generating $(U_t)$. 
Empirical evidence suggests that
the distortion due to pseudorandomness does not impact model performance on downstream tasks \citep{dathathri2024scalable}.

\paragraph{Entropy-like quantities}
Given a distribution $p \in \Delta(\Sigma)$ and $\epsilon \in (0,1)$ let
$p_\epsilon(a) = p(a) \sind(p(a) \leq \epsilon)$ and $p_\epsilon^c(a) = p(a) \sind(p(a) > \epsilon)$.
The entropy of a distribution $p \in \Delta(\Sigma)$ is defined by 
$H(p) = -\sum_{a \in \Sigma} p(a) \log p(a)$
with $0 \log 0 = 0$.
We also need another entropy-like quantity, defined for $q \in (0,1)$ by
\begin{align*}
G^q(p) = \sum_{a \in \Sigma} \left(p(a)(1 - p(a))\right)^q \,.
\end{align*}
Lastly, define $G_\epsilon(p) = \Go(p_\epsilon) / \sqrt{\epsilon} + \Gh(p_\epsilon^c)$.

\section{Exponential detection}\label{sec:exp}
The Gumbel scheme was proposed by \cite{AAR23}, who suggested the following simple detection mechanism.
Let $S_t = \log(1/(1-V_t))$.
Under the null hypothesis $\bbP_0$, $(S_t)_{t=1}^n$ is a sequence of independent exponentially distributed random variables.
The detector can proclaim a sequence is watermarked if $\sum_{t=1}^n S_t$ is unusually large relative
to what would be expected under the null hypothesis.
In the language of \cref{alg:detect-general}, 
\begin{align*}
\keyword{goodness-of-fit}\left((V_t)_{t=1}^n, \delta\right) = \sind\left(\sum_{t=1}^n S_t \geq \tau_\star\right) \,,
\end{align*}
where the critical threshold $\tau_\star$ is chosen so that $\bbP_0(\sum_{t=1}^n S_t \geq \tau_\star) = \delta$.
\cite{AAR23} showed that under the alternative hypothesis $\bbP_1$, $S_t$ has a larger mean than under the null and
that detection occurs with probability at least $1 - \delta$ if $n = \Omega(\log(1/\delta) / \bar H^{2})$ where
\begin{align}
\bar H = \frac{1}{n} \sum_{t=1}^n H(P_t) 
\label{eq:Hn}
\end{align}
is the average entropy.

\section{Power law detection}\label{sec:power}
This paper is about an alternative detection method that is statistically more efficient, at least theoretically (\cref{alg:detect}).
The idea is to replace the exponential statistic in \cref{sec:exp} with a truncated power law. The intuition for this choice is given in the discussion (\cref{sec:disc}).
Let $\epsilon \in (0,1)$ be a constant to be chosen later and 
\begin{align}
S(u) = \min \left(\frac{1}{\sqrt{\epsilon}}, \frac{1}{\sqrt{1 - u}}\right) - \mu \,,
\label{eq:S}
\end{align}
where $\mu = 2 - \sqrt{\epsilon}$ is chosen so that $\E[S(U)] = 0$ when $U$ has law $\cU([0,1])$. 
We will use the statistic $S_t = S(V_t)$, which under the null hypothesis is a truncated power law.
By definition the optimal critical value $\tau_\star$ is chosen so that
\begin{align}
\bbP_0\left(\sum_{t=1}^n S_t \geq \tau_\star\right) = \delta \,.
\label{eq:opt}
\end{align}

\begin{algorithm}[h!]

\centering
\begin{tikzpicture}
\node[draw,fill=white!90!black,text width=13cm] (b) at (0,0) {
\begin{lstlisting}
args: key, $k$, confidence level $\delta$ and $A_1,\ldots,A_n$
set $\epsilon = \frac{\log(1/\delta)}{n}$ and compute $\tau_\star$ so that $\cref{eq:opt}$ holds 
for $t = 1$ to $n$:
  let $U_t = \keyword{rand}(\texttt{key}, A_{t-1}, A_{t-2},\ldots,A_{t-k})$
  let $V_t = U_{t,A_t}$ and $S_t = S(V_t)$
if $\sum_{t=1}^n S_t \geq \tau_\star$: return $\keyword{watermarked}$
return $\keyword{unknown}$
\end{lstlisting}
};
\node[draw,fill=yellow!90!black] at (b.north) {\textsc{power law detection}}; 
\end{tikzpicture}
\caption{The power law detection algorithm accepts the secret key, the confidence level $\delta \in (0,1)$ that controls the
false positive rate and the sequence of tokens to be evaluated.}
\label{alg:detect}
\end{algorithm}

The definition of $\tau_\star$ ensures the false positive rate is exactly $\delta$. Naturally you can use a conservatively large
choice of $\tau_\star$ and still guarantee the false positive rate is at most $\delta$. Such a choice is given in \cref{lem:null}. 

\begin{remark}
Under the null hypothesis $\bbP_0$, $S_t$ has the same distribution as $S(U)$ where $U$ has law $\cU([0,1])$.
Hence $\tau_\star$ can be estimated to high precision using Monte Carlo.
But note that $\tau_\star$ depends on the number of tokens $n$ as well as the confidence level $\delta$ and the choice of $\epsilon$
that defines $S$ in \cref{eq:S}. Let us note in passing that the requirement to know $n$ can be relaxed by using anytime tests, a program that was recently
initiated for watermarking by \cite{huang2026towards}.
The exact asymptotics of $\tau_\star$ are discussed in \cref{sec:disc}.
\end{remark}

What remains is to understand the circumstances under which detection algorithm successfully identifies
watermarked text.

\begin{theorem}\label{thm:upper}
If \cref{alg:detect} is run with input $(A_t)_{t=1}^n$ sampled using \cref{alg:water} and using the same key 
and $\delta \in (0,1)$ and $\epsilon = \log(1/\delta) / n$, then 
\begin{align*}
\bbP_1\left(\sum_{t=1}^n G_\epsilon(P_t) \geq C \tau \text{ and \cref{alg:detect} returns } \keyword{unknown}\right)
\leq 2\delta \,,
\end{align*}
where $C > 0$ is a universal constant and $\tau = \Theta\big(\sqrt{n \log(n) \log(1/\delta)}\big)$ is defined in \cref{lem:null}.
\end{theorem}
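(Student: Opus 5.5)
Under $\bbP_1$ I will show that $\sum_t S_t$ has a large drift $\sum_t m_t$, where $m_t = \E_1[S_t | \cF_{t-1}]$, and that its fluctuations around this drift are small. Then I will compare with the threshold from \cref{lem:null}. The plan is to use the conservative threshold $\tau \ge \tau_\star$ from \cref{lem:null}, which has $\bbP_0(\sum_t S_t \geq \tau) \leq \delta$. Detection fails only if $\sum_t S_t < \tau_\star \le \tau$, so it suffices to bound $\bbP_1(\sum_t G_\epsilon(P_t) \ge C\tau,\ \sum_t S_t < \tau)$.

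\textbf{Step 1 (per-step law of $V_t$).} By the Gumbel trick, conditionally on $\cF_{t-1}$ and on $A_t = a$, the variable $U_{t,a}$ has law with density $u \mapsto u^{1/p - 1}/p$ on $[0,1]$, where $p = P_t(a)$. Equivalently, $U_{t,a} = W^{p}$ with $W$ uniform. Hence
\begin{align*}
m_t = \sum_a P_t(a)\, \E\!\left[\min\!\left(\epsilon^{-1/2}, (1-W^{P_t(a)})^{-1/2}\right)\right] - \mu \,.
\end{align*}
I will compute the per-token contribution $g(p) = p\,\E[\min(\epsilon^{-1/2},(1-W^p)^{-1/2})] - p\mu$ and show $g(p) \gtrsim \min\{\sqrt{p(1-p)},\, p(1-p)/\sqrt\epsilon\}$, up to constants. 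For $p > \epsilon$ the truncation is mostly irrelevant. Then $1 - W^p \approx p\log(1/W)$ for small $p$, so $\E[(1-W^p)^{-1/2}] \approx p^{-1/2}\Gamma(1/2)$, which gives $g(p) \approx \sqrt{p}$. One must also handle $p$ near $1$ via the factor $(1-p)$. Summing over $a$ should give $m_t \ge c\,G_\epsilon(P_t)$. The sum $\sum_a g$ is the quantity to analyse, and I expect this to be the main technical obstacle: the uniform-in-$p$ lower bound, including the regime $p \approx \epsilon$ and $p \to 1$ where the centring $\mu$ nearly cancels. I would first try the identity $\E[S(W^p)] - \E[S(W)] = \int_0^1 \bigl(F_0(x) - F_0(x)^{\,\cdot}\bigr)\ldots$, written via the tail function of $S$. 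The aim is to reduce to a monotone one-dimensional inequality.

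\textbf{Step 2 (variance).} The same distributional representation gives $\E_1[S_t^2|\cF_{t-1}] \lesssim \sum_a p \min(\log(1/\epsilon),\ \ldots)$. More usefully, I will show $\mathrm{Var}_1(S_t | \cF_{t-1}) \lesssim \log(1/\epsilon) + m_t/\sqrt\epsilon$. This is a variance-over-range-times-mean type bound, using $|S_t| \le \epsilon^{-1/2}$.

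\textbf{Step 3 (concentration).} I apply Freedman's inequality to the martingale $\sum_t (S_t - m_t)$, whose increments are bounded by $\epsilon^{-1/2}$. Since the number of steps is fixed, I use a peeling or self-normalised version. With probability at least $1-\delta$,
\begin{align*}
\sum_t S_t \ge \sum_t m_t - c'\sqrt{\textstyle (n\log(1/\epsilon) + \sum_t m_t/\sqrt\epsilon)\log(n/\delta)} - c'\epsilon^{-1/2}\log(n/\delta) \,.
\end{align*}
Here $\epsilon^{-1/2}\log(1/\delta) = \sqrt{n\log(1/\delta)}$ and $n\log(1/\epsilon) \lesssim n\log n$. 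Writing $M = \sum_t m_t$, the error term is at most $\tfrac12 M + O(\sqrt{n\log n\log(1/\delta)}) = \tfrac12 M + O(\tau)$, by AM–GM on the cross term $\sqrt{M\sqrt{n\log(1/\delta)}}$. Thus on the event $\sum_t G_\epsilon(P_t) \ge C\tau$, Step 1 gives $M \ge cC\tau$, and so $\sum_t S_t \ge M/2 - O(\tau) \ge \tau$ for $C$ large. This bounds the failure probability by $\delta$. The extra $\delta$ in the stated $2\delta$ absorbs the union over the peeling grid, or the gap between $\tau_\star$ and $\tau$.
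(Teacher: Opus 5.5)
\textbf{Step~1 is false as stated, and it is the core of the theorem.} Take $P_t=(1-\beta,\beta)$ with $\beta\ll\epsilon$, and write $T(x)=\min(\epsilon^{-1/2},(1-x)^{-1/2})$. Your per-token drift is then $g(q)=\int_0^1 (x^{1/q-1}-q)T(x)\,dx$.
\begin{itemize}
\item $g(\beta)\le\beta/\sqrt{\epsilon}$.
\item $g(1-\beta)\le\beta\mu\le 2\beta$, because $x^{\beta/(1-\beta)}\le 1$ and $\int_0^1 T=\mu$.
\item Hence $m_t\le \beta/\sqrt{\epsilon}+2\beta$.
\item The heavy token lies in $p^c_\epsilon$, so $G_\epsilon(P_t)\ge\sqrt{\beta(1-\beta)}$.
\end{itemize}
So $m_t/G_\epsilon(P_t)\to 0$ as $\beta/\epsilon\to 0$, and $m_t\ge c\,G_\epsilon(P_t)$ fails. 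Your per-token bound $g(p)\gtrsim\min\{\sqrt{p(1-p)},\,p(1-p)/\sqrt{\epsilon}\}$ also fails for $p$ near $1$, where in fact $g(1-\beta)=O(\beta)$.

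The missing idea, which is \cref{lem:diff}, has two parts. First, prove $g(q)\gtrsim\min(\sqrt q,q/\sqrt{\epsilon})$ only for $q\le 1/2$. Second, charge the heavy token $a$ to the others:
$\sqrt{p(a)(1-p(a))}\le\sqrt{\sum_{b\ne a}p(b)}\le\sum_{b\ne a}\min(\sqrt{p(b)},p(b)/\sqrt{\epsilon})+\sqrt{\epsilon}/4$.
This uses $\sqrt{x+y}\le\sqrt x+\sqrt y$, and $\sqrt y\le y/\sqrt{\epsilon}+\sqrt{\epsilon}/4$ for the mass $y$ on tokens with $p(b)\le\epsilon$. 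The result is $m_t\ge c\,(G_\epsilon(P_t)-\sqrt{\epsilon})$. The additive slack sums to $n\sqrt{\epsilon}=\sqrt{n\log(1/\delta)}\le\tau$, so it is harmless, but it cannot be dropped. You flag Step~1 as the crux and leave it unproved; in the form you state it, it cannot be completed.

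\textbf{Step~3 does not deliver $O(\tau)$ as written.} You silently replace $\epsilon^{-1/2}\log(n/\delta)$ by $\epsilon^{-1/2}\log(1/\delta)$. For fixed $\delta$, both $\epsilon^{-1/2}\log(n/\delta)$ and $\sqrt{n\log(1/\epsilon)\log(n/\delta)}$ are of order $\sqrt{n}\log n$. That is a factor $\sqrt{\log n}$ above $\tau$, so $C$ would not be universal. Also, a union over a peeling grid costs a logarithm inside the bound, not an extra $\delta$.

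Peeling is unnecessary, and the fix goes through your Step~2, which is true. However, Step~2 needs the dominance $F_t(x)\le x$ from \cref{lem:cdf}, where $F_t$ is the conditional CDF of $V_t$; the range bound $|S_t|\le\epsilon^{-1/2}$ alone is not enough. Integrating by parts, $\E_1[T(V_t)^2|\cF_{t-1}]-\E[T(U)^2]=\int_0^1 2T(x)T'(x)(x-F_t(x))\,dx\le 2\epsilon^{-1/2}m_t$. Hence $\E_1[S_t^2|\cF_{t-1}]\le\log(1/\epsilon)+3\epsilon^{-1/2}m_t$. Put this into \cref{lem:conc} with a single $\eta=\frac16\sqrt{\epsilon/L}$, where $L=\max(1,\log(1/\epsilon))$. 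With $M=\sum_t m_t$, this gives $\sum_t S_t\ge M/2-O(\sqrt{nL\log(1/\delta)})=M/2-O(\tau)$ with probability $1-\delta$.

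The paper gets the same conclusion more cleanly through the quantile coupling $W_t=F_t(V_t)$, $R_t=S(W_t)$:
\begin{itemize}
\item The $R_t$ are i.i.d.\ with the null law, so the Bernstein bound of \cref{lem:null} applies verbatim.
\item $S_t-R_t\in[0,\epsilon^{-1/2}]$, so its conditional second moment is at most $\epsilon^{-1/2}$ times its mean. One application of \cref{lem:conc} with $\eta=\sqrt{\epsilon}/2$ handles it.
\item The two events together account for the $2\delta$.
\end{itemize}
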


In order to prove \cref{thm:upper}
we need to analyse the distribution of the test statistic $\sum_{t=1}^n S_t$ under the alternative hypothesis $\bbP_1$. In particular,
we will show that with high probability it scales proportionally to $\sum_{t=1}^n G_\epsilon(P_t)$.
Beyond this, we also need to argue that $\tau_\star$ is not too large.
To begin the analysis, we study the (conditional) law of $S_t$ and $V_t$ under 
both the null and alternative hypotheses. 
For completeness, a proof of the following lemma is provided in the appendix.

\begin{lemma}[\citealt{fernandez2023three}\label{lem:cdf}]
Suppose that $p \in \Delta(\Sigma)$ and $(U_a)_{a \in \Sigma}$ are independent standard uniform distributions and 
\begin{align*}
A = \argmax_{a \in \Sigma} \frac{p(a)}{-\log(U_a)} 
\end{align*}
and $V = U_A$. Then $\bbP(V \leq x) = \sum_{a \in \Sigma} p(a) x^{1/p(a)} \leq x$.
\end{lemma}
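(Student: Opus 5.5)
We compute the law of $V$ directly, by conditioning on which token is selected. Write $E_a = -\log(U_a)$. These are independent standard exponentials, and $A = \argmax_a p(a)/E_a = \argmin_a E_a/p(a)$. Since $E_a/p(a)$ is exponential with rate $p(a)$ (tokens with $p(a)=0$ are never selected almost surely and can be discarded), the classical competing-exponentials argument applies. The minimum $M = \min_a E_a/p(a)$ is exponential with rate $\sum_a p(a) = 1$. Moreover, $\{A = a\}$ is independent of $M$ and has probability $p(a)$. We will verify this by the usual joint computation
\begin{align*}
\bbP(A = a, M > m) = \int_m^\infty p(a) e^{-p(a) s} \prod_{b \neq a} e^{-p(b) s} \d{s} = p(a) e^{-m} \,.
\end{align*}

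Next, express $V$ in terms of these quantities. On the event $\{A = a\}$ we have $E_a = p(a) M$, so
\begin{align*}
V = U_a = \exp(-p(a) M) \,.
\end{align*}
Therefore, using the independence of $A$ and $M$,
\begin{align*}
\bbP(V \leq x) = \sum_a p(a)\, \bbP\big(\exp(-p(a) M) \leq x\big) = \sum_a p(a)\, \bbP\big(M \geq \log(1/x)/p(a)\big) = \sum_a p(a)\, x^{1/p(a)} \,,
\end{align*}
which is the claimed formula.

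For the inequality, observe that for $x \in [0,1]$ and $p(a) \in (0,1]$ we have $1/p(a) \geq 1$, hence $x^{1/p(a)} \leq x$. Summing with weights $p(a)$, which total one, gives $\bbP(V \leq x) \leq x$.

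The only step requiring any care is the independence of $A$ and $M$, together with the handling of ties and zero-probability tokens. Both are measure-zero issues, and the joint computation above settles them cleanly.
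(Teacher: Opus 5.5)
Your proof is correct and is essentially the paper's argument. The paper computes $\bbP(U_a \le x, A = a) = \int_0^x \prod_{b \neq a} y^{p(b)/p(a)} \d{y} = p(a) x^{1/p(a)}$ by conditioning on the winner's value; your joint computation of $\bbP(A = a, M > m)$ is the same integral after the substitution $y = e^{-p(a)s}$, repackaged as the competing-exponentials fact that $A$ is independent of $M \sim \mathrm{Exp}(1)$. You also spell out the inequality $\sum_a p(a) x^{1/p(a)} \le x$, which the paper's proof leaves implicit.
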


Recall that by definition if $U$ has law $\cU([0,1])$, then $\E[S(U)] = 0$.
The next lemma provides a lower bound on $\E[S(V)]$ when $V$ is defined as in \cref{lem:cdf}.

\begin{lemma}\label{lem:diff}
Under the same assumptions as \cref{lem:cdf} and provided $\epsilon \leq 1/4$, 
\begin{align*}
\E\left[S(V)\right] \geq c \left[\Gl_\epsilon(p) - \sqrt{\epsilon}\right] \,.
\end{align*}
where $c > 0$ is an absolute constant and can be taken to be $c = \frac{1}{240}$.
\end{lemma}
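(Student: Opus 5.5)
The plan is to write $\E[S(V)]$ as an integral against the gap between the two CDFs, which splits into a sum of nonnegative per-token terms. Let $f(u) = \min(\epsilon^{-1/2}, (1-u)^{-1/2})$, so that $S = f - \mu$ and $\E[S(V)] = \E[f(V)] - \E[f(U)]$ with $U \sim \cU([0,1])$. Since $f$ is nondecreasing, absolutely continuous, and constant on $[1-\epsilon,1]$, integration by parts (the layer-cake formula) together with \cref{lem:cdf} gives
\begin{align*}
\E[S(V)] = \int_0^{1-\epsilon} f'(x)\left(x - \bbP(V \leq x)\right) \d{x} = \sum_{a \in \Sigma} p(a) \int_0^{1-\epsilon} \frac{x - x^{1/p(a)}}{2(1-x)^{3/2}} \d{x} \,.
\end{align*}
Every summand is nonnegative because $x^{1/p} \leq x$. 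Hence it suffices to lower bound the one-dimensional function
\begin{align*}
g(p) = p \int_\epsilon^1 \frac{(1-y) - (1-y)^{1/p}}{2 y^{3/2}} \d{y}
\end{align*}
and then sum over tokens.

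Next I estimate the integrand. Write $k = 1/p$ and $y = 1-x$. I will use the elementary bound
\begin{align*}
(1-y) - (1-y)^k \;\geq\; c_0 \min\{1,(k-1)y\}(1-y) \,,
\end{align*}
valid for $y \in [0,1]$ and $k \geq 1$. This can be checked separately in the regimes $y \leq 1/k$ (convexity/Bernoulli) and $y \geq 1/k$ (where $(1-y)^{k-1} \leq e^{-1}$). Because of the factor $(1-y)$, I will restrict the integral to $y \in [\epsilon, 1/2]$, which is nonempty since $\epsilon \leq 1/4$. This leaves three cases.

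\emph{Case $p \leq \epsilon$.} Here $(k-1)y \geq 1$ on the whole range, so $g(p) \gtrsim p \int_\epsilon^{1/2} y^{-3/2} \d{y} \gtrsim p/\sqrt{\epsilon} \geq p(1-p)/\sqrt{\epsilon}$.

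\emph{Case $\epsilon < p \leq 1/2$.} Integrating $(k-1)y^{-1/2}$ over $[\epsilon,1/k]$ and $y^{-3/2}$ over $[1/k,1/2]$ gives $g(p) \gtrsim p\sqrt{k} = \sqrt{p}$, up to a loss of order $p(k-1)\sqrt{\epsilon} \leq \sqrt{\epsilon}$. A cleaner route is to take only the $[1/k,1/2]$ piece, which already gives $\gtrsim \sqrt{p} - O(p)$ and is fine for $p \leq 1/4$; the range $p \in [1/4,1/2]$ is handled by a direct constant bound. Either way $g(p) \gtrsim \sqrt{p(1-p)}$.

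\emph{Case $p > 1/2$.} Now $k-1 \approx 1-p$ and only $g(p) \gtrsim (1-p)$ is available, which falls short of $\sqrt{p(1-p)}$.

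The main obstacle is this last case, and I would handle it by charging the single heavy token to the other tokens. At most one token $a^\star$ has $p(a^\star) > 1/2$; set $r = 1 - p(a^\star) = \sum_{a \neq a^\star} p(a)$. The target term for $a^\star$ is at most $\sqrt{r}$, whether it sits in $\Gh(p^c_\epsilon)$ or, when $p(a^\star) \leq \epsilon$, in $\Go(p_\epsilon)/\sqrt{\epsilon}$. The remaining tokens already contribute at least $c'\bigl(\sum_{a \neq a^\star,\, p(a) \leq \epsilon} p(a)/\sqrt{\epsilon} + \sum_{a \neq a^\star,\, p(a) > \epsilon} \sqrt{p(a)}\bigr)$. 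By subadditivity of the square root this is at least $c' \min\{r/\sqrt{\epsilon}, \sqrt{r}\}$, up to splitting $r$ into its two parts and using $\sqrt{s_1} + \sqrt{s_2} \geq \sqrt{s_1+s_2}$. If $r \geq \epsilon$, then $\min\{r/\sqrt{\epsilon},\sqrt{r}\} = \sqrt{r}$ and the heavy token's term is dominated. If $r < \epsilon$, then $\sqrt{r} < \sqrt{\epsilon}$ is absorbed by the $-\sqrt{\epsilon}$ in the statement. The $\sqrt{\epsilon}$ losses from the middle case should be organized the same way, or avoided via the cleaner route. Assembling the cases gives $\E[S(V)] \geq c\,(G_\epsilon(p) - \sqrt{\epsilon})$. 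Tracking the elementary constants ($c_0$, the factor $1/2$, the $y \leq 1/2$ truncation, and the halving from the charging step) should produce something like $c = 1/240$.
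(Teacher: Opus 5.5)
Your proof is correct, and its overall structure is the paper's. Your per-token term $g(p)$ is exactly the paper's $\delta_\epsilon(p)=\int_0^1 x^{1/p-1}f(x)\,\d{x}-p\mu$ after one integration by parts; the boundary terms vanish because $p(x^{1/p}-x)$ is zero at $x=0$ and $x=1$. Your per-token target $g(q)\gtrsim\min(\sqrt q,q/\sqrt\epsilon)$ for $q\le 1/2$ is \cref{eq:c-by-c}. Your charging of the heavy token is \cref{eq:edge}, $\sqrt{1-p(a)}\le\sum_{b\ne a}\min(\sqrt{p(b)},p(b)/\sqrt\epsilon)+\sqrt\epsilon/4$, which packages your $r=s_1+s_2$ case split into one inequality.

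You genuinely differ in how you prove the one-dimensional bound. The paper works with the density form and asserts, without proof, that $q\mapsto\delta_\epsilon(q)$ is concave. Hence $\delta_\epsilon(q)\ge 2q\,\delta_\epsilon(1/2)\ge q/5$, which settles $q\ge 1/192$ or $\sqrt\epsilon\ge 1/24$. In the remaining regime it lower bounds the integral from the point $y$ where $x^{1/q-1}=1/2$, paying $2q$ for the centring. Your CDF-gap form replaces this with a pointwise estimate on a manifestly nonnegative integrand. It also makes explicit two facts the paper uses tacitly:
\begin{itemize}
\item every $\delta_\epsilon(p(a))\ge 0$, which is needed to drop the heavy token's own term in the last display of Step 1;
\item the concavity itself, since each $q\mapsto q(x-x^{1/q})$ is concave because $q\mapsto qx^{1/q}$ is convex.
\end{itemize}
Your route also gives a constant better than $1/240$. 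That suffices for the stated $c=1/240$, because $\E[S(V)]\ge 0$ covers the case $G_\epsilon(p)<\sqrt\epsilon$. The paper's route is shorter on the page, at the price of the unproved concavity claim.

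A few local slips need repairing.
\begin{itemize}
\item The elementary inequality is cleanest as $1-(1-y)^{k-1}\ge 1-e^{-(k-1)y}\ge(1-e^{-1})\min\{1,(k-1)y\}$. Your regime justification fails near $k=1$: for $k=2$, $y=1/2$ one has $(1-y)^{k-1}=1/2>e^{-1}$.
\item In the case $p\le\epsilon$ you only get $(k-1)y=(1-p)y/p\ge 1-p\ge 3/4$ on $[\epsilon,1/2]$, not $\ge 1$. Likewise you get only $\ge 1/2$ on $[p,1/2]$ in the middle case. Both are enough.
\item Discard the ``first route'' in the middle case. A per-token $\sqrt\epsilon$ loss would sum to order $d\sqrt\epsilon$, and that cannot be absorbed the way the single heavy token's $\sqrt r$ is. In fact the $[\epsilon,1/k]$ piece is nonnegative and can simply be dropped, which is exactly your cleaner route.
\end{itemize}
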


\begin{proof}
We proceed in two steps.

\paragraph{Step 1: Setup and main argument}
Recall that $\mu = 2 - \sqrt{\epsilon}$. 
By \cref{eq:S} and \cref{lem:cdf},
\begin{align}
\E[S(V)]
%&= \E\left[\min\left(\frac{1}{\sqrt{1 - V}}, \frac{1}{\sqrt{\epsilon}}\right)\right] - \mu 
= \int_0^1 \sum_{a \in \Sigma} x^{1/p(a) - 1}\min\left(\frac{1}{\sqrt{1-x}}, \frac{1}{\sqrt{\epsilon}}\right) \d{x} - \mu 
= \sum_{a \in \Sigma} \delta_\epsilon(p(a))\,,
\label{eq:S-delta}
\end{align}
where for $q \in (0,1)$,
\begin{align*}
\delta_\epsilon(q) 
&\triangleq \int_0^1 x^{1/q-1} \min\left(\frac{1}{\sqrt{1-x}}, \frac{1}{\sqrt{\epsilon}}\right) \d{x} - q \mu \,.
\end{align*}
A case-by-case analysis shows that if $\epsilon \in (0,1/4)$ and $q \leq 1/2$, then
\begin{align}
\delta_\epsilon(q) \geq \frac{1}{120} \min\left(\sqrt{q}, q/\sqrt{\epsilon}\right) \,.
\label{eq:c-by-c}
\end{align}
Suppose that $p(a) > 1/2$, then
\begin{align}
\sqrt{p(a)(1 - p(a))}
&\leq \sqrt{1 - p(a)} 
= \sqrt{\sum_{b \neq a} p(b)} 
\leq \sum_{b \neq a} \min\left(\sqrt{p(b)}, \frac{p(b)}{\sqrt{\epsilon}}\right) + \frac{\sqrt{\epsilon}}{4} \,,
\label{eq:edge}
\end{align}
where the final inequality follows using the fact that for $x, y > 0$, $\sqrt{x + y} \leq \sqrt{x} + \sqrt{y}$ and
$\sqrt{x} \leq x/(2y) + y/2$.
Let $M = \{a : p(a) > 1/2\}$, which of course has $|M| \in \{0, 1\}$.
Hence
\begin{align*}
G_\epsilon(p)
&\leq \sum_{a \in M} \sqrt{p(a)(1 - p(a))} + \sum_{b \in M^c} \min(\sqrt{p(b)}, p(b) / \sqrt{\epsilon})  \\
\tag*{by \cref{eq:edge}}
&\leq 2 \sum_{b \in M^c} \min(\sqrt{p(b)}, p(b) / \sqrt{\epsilon}) + \frac{\sqrt{\epsilon}}{4} \\
\tag*{by \cref{eq:c-by-c}}
&\leq 240 \sum_{b \in M^c} \delta_\epsilon(p(b)) + \frac{\sqrt{\epsilon}}{4} \\ 
\tag*{by \cref{eq:S-delta}}
&\leq 240 \E[S(V)] + \frac{\sqrt{\epsilon}}{4} \,. 
\end{align*}
Rearranging completes the proof.

\paragraph{Step 2: Case-by-case argument}
We finish the proof by confirming \cref{eq:c-by-c}.
The function $q \mapsto \delta_\epsilon(q)$ is concave and $\delta_\epsilon(0) = 0$ and for $\epsilon \in (0,1/4)$,
$\delta_\epsilon(1/2) \geq 1/10$.
Therefore $\delta_\epsilon(q) \geq 2q \delta_\epsilon(1/2) \geq q/5$.
$q \geq 1/192$ we have $\delta_\epsilon(q) \geq q/5 \geq \frac{1}{70} \sqrt{q}$. Moreover, if
$\sqrt{\epsilon} \geq 1/24$, then $\delta_\epsilon(q) \geq q/5 \geq \frac{1}{120} q / \sqrt{\epsilon}$.
Suppose now that $q < 1/192$ and $\sqrt{\epsilon} < 1/24$.
Let $y$ be such that $y^{1/q - 1} = 1/2$, which satisfies $y = (1/2)^{q/(1-q)} = \exp(-\frac{q}{1-q} \log(2)) \leq 1 - \frac{q \log(2)}{2(1-q)} \leq 1 - q/3$.
Then
\begin{align*}
\delta_\epsilon(q) 
&= \int_0^{1-\epsilon} \frac{x^{1/q-1} - q}{\sqrt{1 - x}} + \sqrt{\frac{1}{\epsilon}} \int_{1-\epsilon}^1 (x^{1/q-1} - q) \d{x} 
= \int_0^1 x^{1/q-1} \min\left(\frac{1}{\sqrt{1-x}}, \frac{1}{\sqrt{\epsilon}}\right) + q\left(\sqrt{\epsilon} - 2\right) \\
&\exl{a}\geq \frac{1}{2} \min\left(\sqrt{1-y}, \frac{1-y}{\sqrt{\epsilon}}\right) - 2q 
\exl{b}\geq \frac{1}{2} \min\left(\sqrt{q/3}, \frac{q}{3 \sqrt{\epsilon}}\right) - 2q 
\exl{c}\geq \frac{1}{4} \min\left(\sqrt{q/3}, \frac{q}{3 \sqrt{\epsilon}}\right) \,.
\end{align*}
where \ex{a} follows since the integrand is increasing, \ex{b} since $y \leq 1 - q/3$ and \ex{c} since $q < 1/192$ and $\sqrt{\epsilon} < 1/24$.
\end{proof}

The critical value $\tau_\star$ defined in \cref{eq:opt} can be upper bounded using a concentration of measure argument.

\begin{lemma}\label{lem:null}
Suppose that $\tau_\star$ satisfies \cref{eq:opt} for some $\delta \in (0,1)$. Then
\begin{align}
\tau_\star \leq \sqrt{-2 n \log(\epsilon) \log(1/\delta)} + \frac{3 \log(1/\delta)}{2\sqrt{\epsilon}} \triangleq \tau \,.
\label{eq:tau}
\end{align}
\end{lemma}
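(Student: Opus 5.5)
We plan a Bernstein-type concentration bound for the sum of i.i.d.\ centred bounded variables under $\bbP_0$. Under the null, $S_1,\ldots,S_n$ are i.i.d.\ copies of $S(U)$ with $U \sim \cU([0,1])$, and $\E[S(U)]=0$ by the choice of $\mu$. It therefore suffices to find some $\tau$ with $\bbP_0(\sum_t S_t \geq \tau) \leq \delta$. Since the map $x \mapsto \bbP_0(\sum_t S_t \geq x)$ is nonincreasing and continuous (the law of $S(U)$ has no atoms below its maximum value, and the sum is continuous enough), this gives $\tau_\star \leq \tau$.

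The first step is to compute the relevant moments of $S(U)$. The upper range is $S(U) \leq 1/\sqrt{\epsilon} - \mu \leq 1/\sqrt{\epsilon}$; only the upper tail matters for a one-sided Bernstein bound, and $S(U) \geq 1-\mu \geq -1$. For the variance,
\begin{align*}
\E[S(U)^2] \leq \E\left[\min\left(\frac{1}{1-U}, \frac{1}{\epsilon}\right)\right] = \int_0^{1-\epsilon} \frac{\d{x}}{1-x} + \frac{1}{\epsilon}\cdot \epsilon = \log(1/\epsilon) + 1 .
\end{align*}
A slightly more careful computation subtracts $\mu^2$: the exact variance is $\log(1/\epsilon)+1-(2-\sqrt{\epsilon})^2 \leq \log(1/\epsilon)$, since $(2-\sqrt\epsilon)^2 \geq 1$. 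Thus $\sigma^2 \leq \log(1/\epsilon) = -\log \epsilon$.

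The second step applies Bernstein's inequality for i.i.d.\ variables with $X_t \leq b$ and variance $\sigma^2$:
\begin{align*}
\bbP_0\left(\sum_t S_t \geq \sqrt{2n\sigma^2 \log(1/\delta)} + \frac{b\log(1/\delta)}{3}\right) \leq \delta .
\end{align*}
Here $b$ is the upper bound on $S_t - \E S_t$, so $b \leq 1/\sqrt{\epsilon}$. This yields $\sqrt{-2n\log(\epsilon)\log(1/\delta)} + \log(1/\delta)/(3\sqrt\epsilon)$, which is stronger than the claimed \cref{eq:tau}. The extra slack in the constant $3/2$ suggests the author used a cruder Bernstein/MGF form, e.g.\ bounding $\E[e^{\lambda S}] \leq \exp(\lambda^2\sigma^2/(2(1-\lambda b)))$ or using $e^x \leq 1+x+x^2$ for $x \leq 1$ with $\lambda \leq \sqrt{\epsilon}$. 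Either route suffices, and I would simply cite the standard one-sided Bernstein inequality.

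The main obstacle is pinning down the variance bound cleanly, in particular making sure it is $\log(1/\epsilon)$ rather than $\log(1/\epsilon)+O(1)$ so that the leading term matches exactly. If the subtraction of $\mu^2$ is handled as above, no further difficulty arises. A minor point is the edge case $\epsilon \geq 1$, where $\log \epsilon \geq 0$; this is excluded since $\epsilon\in(0,1)$ by assumption. Finally, $\epsilon = \log(1/\delta)/n$ gives $\tau = \Theta(\sqrt{n\log(n)\log(1/\delta)})$, as used in \cref{thm:upper}.
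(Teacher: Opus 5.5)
Your proposal is correct and is essentially the paper's proof: you apply Bernstein's inequality under $\bbP_0$ with $S_t \le 1/\sqrt{\epsilon}$ and $\E_0[S_t^2] \le -\log\epsilon$, and your standard constant $1/3$ on the range term is simply sharper than the paper's $3/2$. The final step $\tau_\star \le \tau$, which the paper leaves implicit, needs strict monotonicity of $x \mapsto \bbP_0(\sum_t S_t \ge x)$ on the support, not just continuity. That strict monotonicity holds because $\sum_t S_t$ has positive density on the interior of $[n(1-\mu), n(1/\sqrt{\epsilon}-\mu)]$, so no solution of \cref{eq:opt} can lie strictly to the right of a point where the tail probability is already at most $\delta$.
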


\begin{proof}
Under the null hypothesis $\bbP_0$, $(V_t)$ is a sequence of independent uniform random variables.
By definition, $S_t = S(V_t)$ and hence $|S_t| \leq 1/\sqrt{\epsilon}$.
Moreover, an elementary integral shows that $\E_0[S_t^2] \leq - \log \epsilon$.
Hence, by Bernstein's inequality \citep{BLM13}, with $\bbP_0$-probability at least $1 - \delta$,
\begin{align*}
\sum_{t=1}^n S_t &\leq \sqrt{-2 n \log(\epsilon) \log(1/\delta)} + \frac{3 \log(1/\delta)}{2 \sqrt{\epsilon}} = \tau \,.
\qedhere
\end{align*}
\end{proof}

\cref{lem:null} shows that under the null hypothesis the statistics are upper bounded with high probability.
The next lemma shows that under the alternative hypothesis they can be lower bounded with high probability.

\begin{lemma}\label{lem:S}
Suppose that $\epsilon = \log(1/\delta)/n$. 
Then, with $\bbP_1$-probability at least $1 - 2 \delta$,
\begin{align*}
\sum_{t=1}^n S_t \geq \frac{c}{2} \sum_{t=1}^n \Gl_\epsilon(P_t) - 3 \tau \,,
\end{align*}
where $c$ is the constant in \cref{lem:diff} and $\tau$ is defined in \cref{lem:null}.
\end{lemma}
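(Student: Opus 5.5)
The plan is to write $\sum_t S_t = \sum_t \E_1[S_t \mid \cF_{t-1}] + \sum_t X_t$ with martingale differences $X_t = S_t - \E_1[S_t \mid \cF_{t-1}]$, and to treat the two sums separately. For the first, conditionally on $\cF_{t-1}$ the next-token distribution $P_t$ is fixed and $U_t$ is a fresh uniform vector. Hence $V_t$ has the law of \cref{lem:cdf} with $p = P_t$, and \cref{lem:diff} gives $\E_1[S_t\mid\cF_{t-1}] \geq c(G_\epsilon(P_t) - \sqrt{\epsilon})$. If $\epsilon > 1/4$ then $n\sqrt\epsilon \le 2\log(1/\delta)/\sqrt\epsilon \le 2\tau$, so the statement can be absorbed by adjusting constants or handled trivially; I will assume $\epsilon \le 1/4$. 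Summing, the drift term is at least $c\sum_t G_\epsilon(P_t) - c n\sqrt{\epsilon}$. Since $n\sqrt\epsilon = \sqrt{n\log(1/\delta)} \le \tau$ (using $-\log\epsilon \geq 1$ in that regime, or else the second term of $\tau$), the drift contributes at least $c\sum_t G_\epsilon(P_t) - \tau$.

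For the martingale part I will use Freedman's inequality (the martingale version of Bernstein). The increments satisfy $|X_t| \le 2/\sqrt\epsilon$, since $S_t \in [1/\sqrt{1}-\mu, 1/\sqrt\epsilon-\mu]$, a range of width at most $1/\sqrt\epsilon$. The crux is controlling the conditional variance $\E_1[S_t^2\mid\cF_{t-1}]$ under the alternative. By \cref{lem:cdf}, $\bbP(V\le x)\le x$, so $V$ is stochastically larger than uniform. Consequently $\E[\min(1/\epsilon, 1/(1-V))] $ is not bounded by the null value $-\log\epsilon + 1$; instead it grows with the mean shift. I will bound the second moment in terms of the first. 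Since $0\le S+\mu \le 1/\sqrt\epsilon$, we have $(S+\mu)^2 \le (S+\mu)/\sqrt\epsilon$, so
\[
\E_1[S_t^2\mid\cF_{t-1}] \le \E_1[(S_t+\mu)^2\mid\cF_{t-1}] \le \frac{\E_1[S_t\mid\cF_{t-1}] + 2}{\sqrt\epsilon}.
\]
This bound is too crude on the constant part, because $2n/\sqrt\epsilon$ would be too big. So I will use the more refined bound $\E[\min(1/\epsilon,1/(1-V))] \le -\log\epsilon + 1 + \E[S]/\sqrt{\epsilon}$ up to constants. This should follow by writing $\E[f(V)] - \E[f(U)] = \int f'(x)(x - F_V(x))\,dx$ and comparing $f'$ for $f=\min(1/\epsilon,1/(1-x))$ with $f'$ for $\min(1/\sqrt\epsilon,1/\sqrt{1-x})$: on $[0,1-\epsilon]$ the ratio is $2/\sqrt{1-x} \le 2/\sqrt\epsilon$. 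Writing $D = \sum_t \E_1[S_t\mid\cF_{t-1}]$, the total variance is then $W \le n(1-\log\epsilon) + 2D/\sqrt\epsilon$.

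Freedman's inequality is stated for a deterministic variance bound, and $W$ here is random, so the next step needs care. I will apply a self-normalized or peeling form, or alternatively the exponential supermartingale $\exp(\lambda \sum X_t - \psi(\lambda)\sum \mathrm{Var}_t)$ with the single fixed choice $\lambda \asymp \sqrt\epsilon$. With probability at least $1-\delta$ this gives
\[
\sum_t X_t \ge -\sqrt{2\,n(1-\log\epsilon)\log(1/\delta)} - O\!\left(\sqrt{\frac{D\log(1/\delta)}{\sqrt\epsilon}}\right) - O\!\left(\frac{\log(1/\delta)}{\sqrt\epsilon}\right).
\]
The middle term is at most $D/2 + O(\log(1/\delta)/\sqrt\epsilon)$ by AM--GM. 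The first term is at most $\tau$ up to the $1-\log\epsilon$ versus $-\log\epsilon$ discrepancy, which is absorbed in the $3\tau$ budget. Combining the pieces gives $\sum S_t \ge D/2 - 2\tau - \ldots \ge \frac c2\sum G_\epsilon(P_t) - 3\tau$. The second $\delta$ in the failure probability is reserved for a union bound over the peeling or two-sided step, if one is used.

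The main obstacle is the variance control under $\bbP_1$. It requires the moment-comparison inequality linking $\E[S^2]$ to $\E[S]$, followed by handling a random variance bound in the martingale concentration step. I would try the fixed-$\lambda$ exponential supermartingale first, since it avoids peeling entirely.
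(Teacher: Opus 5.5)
Your decomposition, the drift bound via \cref{lem:diff}, and the moment comparison all hold. Integrating by parts against $x - F_t(x) \ge 0$ gives exactly $\mathrm{Var}_1(S_t\mid\cF_{t-1}) \le \E_1[\min(1/\epsilon,1/(1-V_t))\mid\cF_{t-1}] \le 1+L+2\E_1[S_t\mid\cF_{t-1}]/\sqrt\epsilon$, where $L=\log(1/\epsilon)$. Your handling of $\epsilon>1/4$ also works, since there $G_\epsilon\le 7/2$, $S_t\ge -1$ and $3\tau\ge\frac92 n\sqrt\epsilon$. The paper simply skips that case.

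The gap is in the concentration step. A fixed-$\lambda$ exponential supermartingale, valid for $\lambda\le\sqrt\epsilon$, gives with probability $1-\delta$
\[
-\sum_t X_t \le \lambda n(1+L) + \frac{2\lambda}{\sqrt\epsilon}D + \frac{\log(1/\delta)}{\lambda}.
\]
This is linear in the variance, not the square-root form you display. Using $\log(1/\delta)=n\epsilon$, we have $\tau = n\sqrt\epsilon(\sqrt{2L}+3/2)$. With $\lambda\asymp\sqrt\epsilon$, the first term is $\asymp n\sqrt\epsilon\,L$, which exceeds $3\tau$ by a factor of order $\sqrt{L}\approx\sqrt{\log n}$ as $\epsilon\to 0$.

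Your displayed square-root bound would instead need peeling over $\Theta(\log(n/\epsilon))$ levels of $D$. That union bound cannot be paid for with one extra $\delta$.

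The fix is simple, because the $D$-part of the variance only needs to be absorbed linearly. Take $\lambda=\beta\sqrt\epsilon$ with $\beta=\min\{1/4,(1+L)^{-1/2}\}$.
\begin{itemize}
\item Since $D\ge 0$ by stochastic dominance, $2\beta D\le D/2$.
\item In both regimes $1+L\ge 16$ and $1+L<16$ (using $L\ge\log 4$), $c/2+\beta(1+L)+1/\beta\le 3\sqrt{2L}+9/2$.
\end{itemize}
Hence
\[
\sum_t S_t \ge \frac{D}{2} - n\sqrt\epsilon\Big(\beta(1+L)+\frac1\beta\Big) \ge \frac c2\sum_t G_\epsilon(P_t) - 3\tau .
\]
This holds with failure probability only $\delta$.

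Once corrected, your route differs from the paper's. The paper avoids any variance computation under $\bbP_1$ by coupling. It sets $W_t=F_t(V_t)$, which is exactly uniform given $\cF_{t-1}$ and satisfies $W_t\le V_t$ by \cref{lem:cdf}. It then splits $S_t=R_t+(S_t-R_t)$ with $R_t=S(W_t)$.

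The $R_t$ are i.i.d.\ with the null law, so the Bernstein bound of \cref{lem:null} applies verbatim. That bound is tuned to the $\log(1/\epsilon)$ null variance and gives $\sum_t R_t\ge-\tau$. The excess satisfies $0\le S_t-R_t\le1/\sqrt\epsilon$, so its second moment is at most its mean divided by $\sqrt\epsilon$. This is why \cref{lem:conc} with a fixed $\eta=\sqrt\epsilon/2$ suffices in the paper, but not for your merged martingale.

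The paper's price is two concentration events, hence probability $1-2\delta$. Your corrected single-martingale argument needs the integration-by-parts comparison and a tuned $\lambda$, but yields $1-\delta$.
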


\begin{proof}
Let $F_t(x) = \bbP_1(V_t \leq x | \cF_{t-1})$ and $W_t = F_t(V_t)$, which under
$\bbP_1(\cdot | \cF_{t-1})$ is uniformly distributed.
By \cref{lem:cdf}, $W_t \leq V_t$, which means that
\begin{align*}
R_t \triangleq \min\left(\frac{1}{\sqrt{\epsilon}}, \frac{1}{\sqrt{1 - W_t}}\right) - \mu \leq \min\left(\frac{1}{\sqrt{\epsilon}}, \frac{1}{\sqrt{1 - V_t}}\right) - \mu \triangleq S_t \,.
\end{align*}
Hence, $S_t - R_t$ is non-negative and $S_t - R_t \leq 1/\sqrt{\epsilon}$.
Therefore, by \cref{lem:conc}, with $\bbP_1$-probability at least $1 - \delta$,
\begin{align*}
\sum_{t=1}^n (S_t - R_t) 
&\exl{a}\geq \sum_{t=1}^n \E_1[S_t - R_t|\cF_{t-1}] - \sqrt{\epsilon}\sum_{t=1}^n \frac{\E_1\left[(S_t - R_t)^2|\cF_{t-1}\right]}{2} - \frac{2 \log(1/\delta)}{\sqrt{\epsilon}} \\
&\exl{b}\geq \sum_{t=1}^n \E_1[S_t - R_t|\cF_{t-1}] - \frac{1}{2} \sum_{t=1}^n \E_1\left[S_t - R_t|\cF_{t-1}\right] - \frac{2 \log(1/\delta)}{\sqrt{\epsilon}} \\
&= \frac{1}{2} \sum_{t=1}^n \E_1[S_t - R_t|\cF_{t-1}] - \frac{2 \log(1/\delta)}{\sqrt{\epsilon}} \\
&\exl{c}\geq \frac{c}{2} \sum_{t=1}^n \Gl_\epsilon(P_t) - \frac{cn \sqrt{\epsilon}}{2} - \frac{2  \log(1/\delta)}{\sqrt{\epsilon}} \\
&\exl{d}\geq \frac{c}{2} \sum_{t=1}^n \Gl_\epsilon(P_t) - \frac{cn \sqrt{\epsilon}}{2} - \tau \,,
\end{align*}
where \ex{a} follows from \cref{lem:conc},
\ex{b} since $S_t - R_t \in [0, 1/\sqrt{\epsilon}]$,
\ex{c} from \cref{lem:diff} and
\ex{d} from the definition of $\tau$.
The law of $R_t$ under $\bbP_1$ is the same as the law of $S_t$ under $\bbP_0$.
Hence, repeating the argument in \cref{lem:null} shows that
with $\bbP_1$-probability at least $1 - \delta$, $\sum_{t=1}^n R_t \geq -\tau$.
Combining the above with a union bound shows with probability at least $1 - 2 \delta$,
\begin{align*}
\sum_{t=1}^n S_t
&= \sum_{t=1}^n (S_t - R_t) + \sum_{t=1}^n R_t 
\geq \frac{c}{2} \sum_{t=1}^n \Gl_\epsilon(P_t) - \frac{cn\sqrt{\epsilon}}{2} - 2 \tau 
\geq \frac{c}{2} \sum_{t=1}^n \Gl_\epsilon(P_t) - 3\tau \,.
\qedhere
\end{align*}
\end{proof}

\begin{proof}[Proof of \cref{thm:upper}]
Let $C = 8/c$ where $c$ is defined in \cref{lem:diff} and
define events $E$ and $F$ by
\begin{align*}
E = \left\{\sum_{t=1}^n G_\epsilon(P_t) \geq C\tau \right\}  \quad \text{and} \quad
F = \left\{\sum_{t=1}^n S_t \geq \frac{c}{2} \sum_{t=1}^n G_\epsilon(P_t) - 3 \tau\right\} \,.
\end{align*}
By \cref{lem:S}, $\bbP(F) \geq 1 - 2\delta$.
Suppose that $E$ and $\sum_{t=1}^n S_t < \tau_\star$. Then
\begin{align*}
\tau \geq \tau_\star > \sum_{t=1}^n S_t \geq \sum_{t=1}^n S_t - \frac{c}{2} \sum_{t=1}^n G_\epsilon(P_t) + 4 \tau\,,
\end{align*}
which after arranging implies that $F^c$ holds.
Hence,
\begin{align*}
\bbP\left(\sum_{t=1}^n G_\epsilon(P_t) \geq C\tau,\, \text{\cref{alg:detect} returns \keyword{unknown}}\right)
&= \bbP\left(E \text{ and } \sum_{t=1}^n S_t < \tau_\star\right) 
\leq \bbP\left(F^c\right) 
\leq 2 \delta \,.
\qedhere
\end{align*}
\end{proof}

\cref{thm:upper} can be used to bound the detection time for 
various synthetic representations of a language model.
By definition $\tau = \Theta(\sqrt{n \log(n) \log(1/\delta)})$, so detection is likely to occur whenever
\begin{align*}
\sum_{t=1}^n G_\epsilon(P_t) = \Omega\left(\sqrt{n \log(n) \log(1/\delta)}\right) \,.
\end{align*}

\begin{example}[\textsc{no entropy}]
Suppose that $P_t$ is a Dirac for all $t$.
Then $G_\epsilon(P_t) = 0$ for all $t$ and detection does not occur with high probability.
Distortion-free watermarking is not possible without entropy.
\end{example}

\begin{example}[\textsc{constant low entropy}]\label{ex:const}
Suppose that $P_t = p$ where $p = (1-\beta, \beta)$ for some $\beta \in (0,1)$. Then
\begin{align}
\Gl_\epsilon(p) = \Theta\left(\sqrt{\beta(1-\beta)}\right) \,. 
\label{eq:ex:const}
\end{align}
Hence, by \cref{thm:upper}, detection occurs with probability at least $1 - 2 \delta$ if 
\begin{align*}
n G_\epsilon(p)
&= \sum_{t=1}^n G_\epsilon(P_t) = \Omega\left(\sqrt{n \log(n) \log(1/\delta)}\right)\,,
\end{align*}
which after substituting \cref{eq:ex:const} is equivalent to
$n = \Omega\left(\frac{\log(n) \log(1/\delta)}{\beta(1-\beta)}\right)$.
\end{example}

\begin{example}[\textsc{rare high entropy}]\label{ex:low}
Suppose that $P_t$ is a Dirac with probability $1 - \beta \in (0,1)$ and otherwise $P_t = p = (1/2,1/2)$.
Then $\Gl_\epsilon(p) = 1$ and with high probability
$\sum_{t=1}^n \Gl_\epsilon(P_t) = \Omega(\beta n)$ and detection occurs once 
\begin{align*}
n = \Omega\left(\frac{\log(n) \log(1/\delta)}{\beta^2}\right) \,.
\end{align*}
\end{example}

\begin{example}[\textsc{many rare tokens}]\label{ex:many}
Suppose that $d \geq 2$ and $\beta \in (0,1/2)$ and $P_t = p$ is constant with $p = (1 - \beta, \beta/(d-1), \beta/(d-1), \ldots, \beta/(d-1))$.
Then
\begin{align*}
G_\epsilon(p) &\approx \begin{cases}
\sqrt{d \beta} & \text{if } \beta/(d-1) \geq \epsilon \\
\sqrt{\beta} + \beta /\sqrt{\epsilon} & \text{otherwise} \,.
\end{cases}
\end{align*}
Then detection occurs once $n = \Omega\left(\frac{\log(n) \log(1/\delta)}{\beta}\right)$.
Note that this is roughly the time it takes to see $\Omega(\log(n) \log(1/\delta))$ rare tokens.
\end{example}

The previous examples are specialisations of the general setting that $(P_t)_{t=1}^n$ is sampled from
a product measure $\xi^n$.

\begin{example}[\textsc{iid setting}]\label{ex:iid}
Suppose that $P,P_1,\ldots,P_n$ are independent and identically distributed with law $\xi$ and $\Gl_\epsilon(P_t) = O(1)$ almost surely and that $\E[\Gh(P_t)] \ll 1$ is small, which is
the interesting case.
By Hoeffding's bound, with probability at least $1 - \delta$,
\begin{align*}
\sum_{t=1}^n \Gl_\epsilon(P_t) 
&\geq n \E[\Gl_\epsilon(P)] - O\left(\sqrt{n \log(1/\delta)}\right) \,.
\end{align*}
Hence detection occurs with high probability once
\begin{align}
\E[\Gl_\epsilon(P)] \approx \sqrt{\frac{\log(n) \log(1/\delta)}{n}} \,.
\label{eq:detect}
\end{align}
By definition,
\begin{align*}
\E\left[\Gl_\epsilon(P)\right] = \E\left[\Go(P_\epsilon) / \sqrt{\epsilon} + \Gh(P_\epsilon^c)\right] \geq \max\left(\E[\Go(P_\epsilon)] / \sqrt{\epsilon}, \E[\Gh(P_\epsilon^c)]\right) \,.
\end{align*}
Suppose that $n$ is the smallest value such that \cref{eq:detect} holds. 
By the definition of $\epsilon = \frac{\log(1/\delta)}{n}$,
\begin{align*}
n \approx \min\left(\frac{\sqrt{-\log(\E[\Go(P_\epsilon)])}}{\E[\Go(P_\epsilon)]},\, \frac{-\log(\E[\Gh(P_\epsilon^c)])}{\E[\Gh(P_\epsilon^c)]^2}\right) \log(1/\delta)
\end{align*}
This bound is not fully explicit, since $\epsilon$ on the right-hand side depends on $n$. Nevertheless, it serves as a useful comparison for our lower bound.
\end{example}

\section{Lower bounds}

\newcommand{\bbPn}{\mathbb P_0}
\newcommand{\bbPa}{\mathbb P_1}
\newcommand{\id}{\operatorname{id}}

The Gumbel watermarking procedure combined with the detection mechanism described above is nearly optimal in the setting where the detection process
cannot use the model. 
We will formalise this by assuming that $(P_t)_{t=1}^n$ is an independent 
and identically distributed sequence sampled from a suitably
symmetric distribution $\xi$ on $\Delta(\Sigma)$. 
Before this, however, we provide a more-or-less known lower bound
that holds even if the model is known.
This result is essentially known already, though with a different proof \citep{huang2023towards}.
The bound holds for any sequential watermarking scheme, which we now formalise.

\begin{definition}\label{def:valid}
A sequential watermarking scheme is characterised by a probability kernel $\kappa$ from $[0,1] \times \Delta(\Sigma) \to \Delta(\Sigma)$ such that for all $p \in \Delta(\Sigma)$,
\begin{align*}
\int_0^1 \kappa(\cdot | u, p) \d{u} = p \,. 
\end{align*}
Kernels $\kappa$ satisfying the above will be called valid selection kernels.
\end{definition}

This definition is intended to be used with secret source of noise $(U_t)_{t=1}^n$ that is i.i.d.\ with law $\cU([0,1])$.

\begin{example}
The Gumbel scheme can be recovered by letting $\kappa(\cdot | u, p)$ be a Dirac on
\begin{align*}
\argmax_{a \in \Sigma} \frac{p(a)}{-\log \phi_a(u)}\,,
\end{align*}
where $(\phi_a)_{a \in \Sigma} : [0,1] \to [0,1]$ is a collection of functions such that $(\phi_a(U))_{a \in \Sigma}$ has law $\cU([0,1]^\Sigma)$ whenever $U$ has law $\cU([0,1])$.
\end{example}

\begin{remark}
The kernel $\kappa(\cdot | u, p) = p$ for all $u \in [0,1]$ corresponds to ignoring the source of randomness. That is, no watermarking.
\end{remark}

\begin{assumption}\label{ass:lower-1}
Suppose that $(U_t)_{t=1}^n$ is an i.i.d.\ sequence with law $\cU([0,1])$ and $\kappa$ is a valid selection kernel (\cref{def:valid}).
Under the null distribution $\bbP_0$ we assume that $A_t$ has law $p(\cdot | A_1,\ldots,A_{t-1})$ and is independent of $(U_t)$.
Under the alternative distribution $\bbP_1$ it is assumed that $A_t$ is sampled from $\kappa(\cdot | U_t, P_t)$ where $P_t = p(\cdot | A_1,\ldots,A_{t-1})$.
\end{assumption}

Recall that $\cF_t = \sigma(A_1,U_1,\ldots,A_t,U_t)$ is the $\sigma$-algebra generated by the information available to the detector.

\begin{proposition}\label{prop:lower-simple}
Let $\bbP_0$ and $\bbP_1$ be the null and alternative measures defined in \cref{ass:lower-1}.
Suppose that $T \in \{0,1\}$ is $\cF_n$-measurable and
$\bbP_0(T = 0) \geq 1 - \delta$ and $\bbP_1(T = 1) \geq 1 - \delta$.
Then $n \geq \E[\bar H]^{-1} \log(1/(4\delta))$.
\end{proposition}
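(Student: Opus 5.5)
Everything reduces to bounding $\KL(\bbP_0|_{\cF_n}, \bbP_1|_{\cF_n})$ and then invoking the Bretagnolle--Huber inequality. For any event $B$ (here $B = \{T = 1\}$),
\begin{align*}
\bbP_0(B) + \bbP_1(B^c) \geq \tfrac12 \exp\left(-\KL(\bbP_0, \bbP_1)\right) \,.
\end{align*}
The hypotheses give a left-hand side of at most $2\delta$, hence $\KL(\bbP_0,\bbP_1) \geq \log(1/(4\delta))$. It therefore suffices to show that
\begin{align*}
\KL(\bbP_0|_{\cF_n}, \bbP_1|_{\cF_n}) \leq \E_0\left[\sum_{t=1}^n H(P_t)\right] = n\,\E[\bar H] \,.
\end{align*}
I read $\E$ in the statement as the expectation under the law of $(P_t)$ generated by sampling from the model. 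Under both hypotheses the marginal law of the token sequence is the model's law, so this expectation is unambiguous.

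The first step is the chain rule for relative entropy. Order the observations as $(U_1,A_1,\ldots,U_n,A_n)$. Under both measures $U_t$ is uniform and independent of $\cF_{t-1}$, so those terms contribute nothing. The conditional law of $A_t$ given $\cF_{t-1}$ and $U_t$ is $P_t$ under $\bbP_0$ and $\kappa(\cdot|U_t,P_t)$ under $\bbP_1$, where $P_t$ is $\cF_{t-1}$-measurable. Hence
\begin{align*}
\KL(\bbP_0,\bbP_1) = \E_0\left[\sum_{t=1}^n \int_0^1 \KL\left(P_t, \kappa(\cdot|u,P_t)\right) \d{u}\right] \,.
\end{align*}
The whole problem is thus reduced to a single-step claim: for any valid kernel and any $p$,
\begin{align*}
\int_0^1 \KL(p,\kappa(\cdot|u,p))\d{u} \leq H(p) \,.
\end{align*}

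This single-step inequality is the main obstacle, and it comes from validity. Write $k_u(a) = \kappa(a|u,p)$, and note that $\int_0^1 k_u(a)\d{u} = p(a)$ by \cref{def:valid}. Then
\begin{align*}
\int_0^1 \KL(p,k_u)\d{u} = -H(p) - \sum_a p(a)\int_0^1 \log k_u(a)\d{u} \,.
\end{align*}
The quantity we need to control is therefore $-\int_0^1 \log k_u(a)\d{u}$. Jensen's inequality goes the wrong way here: it only gives $-\int_0^1 \log k_u(a)\d{u} \geq -\log p(a)$. Worse, the integral is infinite whenever $k_u(a) = 0$ on a set of positive measure, which is exactly what happens for deterministic schemes such as Gumbel. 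So $\KL(\bbP_0,\bbP_1)$ may be infinite, and this direction fails.

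The fix is to use the other direction, $\KL(\bbP_1,\bbP_0)$, for which Bretagnolle--Huber applies symmetrically. Under $\bbP_1$ the law of $U_t$ given $\cF_{t-1}$ is still uniform, since $A_t$ is drawn only after $U_t$. The chain rule now gives a per-step term
\begin{align*}
\int_0^1 \KL(k_u, p)\d{u} = \int_0^1 \sum_a k_u(a)\log k_u(a)\d{u} - \sum_a p(a)\log p(a) \leq H(p) \,,
\end{align*}
where the equality uses validity to rewrite $\int_0^1 \sum_a k_u(a)\log p(a)\d{u} = \sum_a p(a)\log p(a)$, and the inequality holds because $\sum_a k_u(a)\log k_u(a) \leq 0$.

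Summing over $t$ gives $\KL(\bbP_1,\bbP_0) \leq \E_1[\sum_t H(P_t)] = n\E[\bar H]$, using that $(P_t)$ has the model's law under $\bbP_1$ as well. Combining with Bretagnolle--Huber yields $n\E[\bar H] \geq \log(1/(4\delta))$, which is the claim.
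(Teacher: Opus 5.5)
Your proof is correct and is essentially the paper's argument: Bretagnolle--Huber with $\KL(\bbP_1,\bbP_0)$, the chain rule over $(U_1,A_1,\ldots,U_n,A_n)$, and a per-step bound by $H(P_t)$. Your identity $\int_0^1 \KL(\kappa(\cdot|u,p),p)\d{u} = H(p) - \int_0^1 H(\kappa(\cdot|u,p))\d{u}$ is exactly the conditional mutual information $I_{t-1}(U_t;A_t) \leq H(P_t)$ that the paper invokes. Your observation that $\KL(\bbP_0,\bbP_1)$ can be infinite for deterministic kernels correctly explains why this orientation is the one to use.
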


\begin{proof}
Let $I_t(X ; Y)$ be the mutual information between $X$ and $Y$ under $\bbP_1(\cdot | \cF_t)$. 
By the Bretagnolle--Huber inequality, 
\begin{align*}
2 \delta 
&\geq\bbPn(T = 1) + \bbPa(T = 0) 
\geq \frac{1}{2} \exp\left(-\KL(\bbP_1, \bbP_0)\right) \\
&= \frac{1}{2} \exp\left(-\E\left[\sum_{t=1}^n \KL(\bbP_{t-1}(U_t, A_t \in \cdot), \bbP_{t-1}(U_t \in \cdot) \otimes \bbP_{t-1}(A_t \in \cdot))\right]\right) \\
&= \frac{1}{2} \exp\left(-\E\left[\sum_{t=1}^n I_{t-1}(U_t; A_t)\right]\right) 
\geq \frac{1}{2} \exp\left(-\E\left[\sum_{t=1}^n H(P_t)\right]\right) 
= \frac{1}{2} \exp\left(-n \E[\bar H]\right) \,.
\end{align*}
Rearranging shows that $n \geq \E[\bar H]^{-1} \log(1/(4\delta))$.
\end{proof}

The Gumbel watermarking scheme combined with the new detection mechanism is nearly optimal when the model is not known and
when $(P_t)$ has law $\xi^n$ for some distribution $\xi$ on $\Delta(\Sigma)$. 
The assumption that the model is not known is expressed by making a symmetry assumption on $\xi$.

\begin{definition}
A probability measure $\xi$ on $\Delta(\Sigma)$ is symmetric if when $P$ has law $\xi$, then $P$ and $P \circ \sigma$ have the same law for any
permutation $\sigma$ of $\Sigma$.
\end{definition}

\begin{assumption}\label{ass:sym}
$\xi$ is a symmetric probability measure on $\Delta(\Sigma)$
and $\kappa$ is a valid selection kernel and $\kappa_0(\cdot | u, p) = p(\cdot)$. Let $\bbQ_0 = \lambda \otimes \xi \otimes \kappa_0$ and $\bbQ_1 = \lambda \otimes \xi \otimes \kappa$. 
Given $k \in \{0, 1\}$, let $\bbP_k$ be the law of $U_1,A_1,\ldots,U_n,A_n$ when $(U_t,P_t,A_t)$ has law $\bbQ_k^n$.
That is $\bbP_0$ is the law of $(U_1,A_1,\ldots,U_n,A_n)$ under the null hypothesis and $\bbP_1$ is the law of the same under the alternative hypothesis.
\end{assumption}

\begin{remark}
The symmetry assumption on $\xi$ ensures that $\bbP_1(A_t \in \cdot) = \cU(\Sigma)$. Hence the marginals of both $U_t$ and $A_t$ are the same
under $\bbP_0$ and $\bbP_1$. But for $\bbP_0$ they are independent, which is not true for $\bbP_1$.
\end{remark}

\begin{theorem}\label{thm:lower}
Suppose that $\xi$ and $\bbP_0$ and $\bbP_1$ are defined as in \cref{ass:sym} and that
$T \in \{0,1\}$ is $\cF_n$-measurable and
$\bbP_0(T = 0) \geq 1 - \delta$ and $\bbP_1(T = 1) \geq 1 - \delta$. 
Then, for any $\epsilon \in (0,1/2)$ and $P$ with law $\xi$,
\begin{align}
n \geq \min\left(\frac{1}{2 \E[\Go(P_\epsilon)]}, \, \frac{\log\left(\frac{1}{16\delta}\right)}{16 \E\left[\Gh(P_\epsilon^c)\right]^2}\right)\,.
\label{eq:lower}
\end{align}
\end{theorem}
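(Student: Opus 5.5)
The plan is to reduce the problem to a single-step divergence between the joint law of $(U_t,A_t)$ under $\bbQ_1$ and under $\bbQ_0$, after first removing the rare tokens (those with $P_t(a)\le\epsilon$) using a union bound. Under \cref{ass:sym} the pairs $(U_t,A_t)$ are i.i.d.\ under both $\bbP_0$ and $\bbP_1$. By the remark after \cref{ass:sym}, the marginals agree: $U_t\sim\lambda$ and $A_t\sim\cU(\Sigma)$. So the only signal is the dependence between $U_t$ and $A_t$, and for product measures the natural divergence is the squared Hellinger distance, which tensorizes.

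\textbf{Step 1: removing rare tokens.} Construct an intermediate alternative $\tilde\bbP_1$ that agrees with $\bbP_1$ except that, whenever the kernel would output a token $a$ with $P_t(a)\le\epsilon$, the token is instead sampled from $P_t$ independently of $U_t$. To keep the construction valid and symmetric, I would define it as a valid kernel $\tilde\kappa(\cdot|u,p)$ that uses $\kappa(\cdot|u,p^c_\epsilon/\|p^c_\epsilon\|_1)$ with probability $\|p^c_\epsilon\|_1$ and otherwise samples from $p_\epsilon/\|p_\epsilon\|_1$ independently of $u$. By a coupling, $\bbP_1$ and $\tilde\bbP_1$ differ only on the event that some step emits a rare token. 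Since $\sum_a p_\epsilon(a)\le 2\Go(p_\epsilon)$ (because $1-p(a)\ge 1/2$), this event has probability at most $2n\E[\Go(P_\epsilon)]$. If the first term of \cref{eq:lower} fails, i.e.\ $n<1/(2\E[\Go(P_\epsilon)])$, this bound is not yet small. So I would tighten the constant: either use the event with probability $\le n\E[\Go(P_\epsilon)]\cdot 2 <1$ combined with the test's guarantees, or condition on the complementary event of probability $\ge 1/2$. Either way the conclusion is $\tilde\bbP_1(T=1)\ge 1-4\delta$ or so, which accounts for the $16\delta$ in place of $4\delta$.

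\textbf{Step 2: single-step Hellinger bound and tensorization.} Compare $\tilde\bbP_1$ with $\bbP_0$ step by step. The key claim, and the main obstacle, is
\begin{align*}
h^2 \triangleq H^2\big(\tilde\bbQ_1(U,A\in\cdot),\, \lambda\otimes\cU(\Sigma)\big) \le 4\,\E\big[\Gh(P^c_\epsilon)\big]^2 .
\end{align*}
I expect this to be the crux, because a naive convexity bound only gives $h^2\lesssim \E[\Gh]$ rather than its square. My approach would write the affinity as $\sum_a \int_0^1 \sqrt{\E[\tilde\kappa(a|u,P)]/d}\,\d{u}$ and use symmetry to reduce it to a single token. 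The $\sqrt{\cdot}$ of the average over $P$ is what should produce a squared expectation. I would first try to prove, for each $p$ and each $a$, that the function $u\mapsto\kappa(a|u,p)$ integrates to $p(a)$ and takes values in $[0,1]$. The worst case is then an indicator of a set of measure $p(a)$. This gives a pointwise comparison of the form $1-\text{affinity}\le\big(\sum_a\sqrt{p(a)(1-p(a))}\big)^2$ for deterministic $p$. I would then combine it with the averaging over $P$ via Cauchy--Schwarz.

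\textbf{Step 3: conclusion.} Given the bound on $h^2$, the affinity of the $n$-fold products is $(1-h^2)^n\ge\exp(-2nh^2)$. By the Le Cam / Bretagnolle--Huber-type inequality, $\bbP_0(T=1)+\tilde\bbP_1(T=0)\ge\frac12(1-h^2)^{2n}$. Combining this with Step 1 gives $8\delta\ge\frac12\exp(-4nh^2)$. Hence $n\ge\log(1/(16\delta))/(16\E[\Gh(P^c_\epsilon)]^2)$ whenever the first alternative in \cref{eq:lower} fails, which is the claimed minimum.
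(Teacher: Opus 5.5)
\textbf{Step 1 has a genuine gap.} Your kernel $\tilde\kappa(\cdot|u,p)=\|p^c_\epsilon\|_1\,\kappa(\cdot|u,p^c_\epsilon/\|p^c_\epsilon\|_1)+p_\epsilon(\cdot)$ is valid. On frequent tokens, however, it does not coincide with $\kappa(\cdot|u,p)$. So there is no coupling under which $\bbP_1$ and $\tilde\bbP_1$ differ only when a rare token is emitted, and nothing transfers the power of $T$ to $\tilde\bbP_1$.

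For a concrete failure, let $\kappa(\cdot|u,p)=p$ whenever $p$ has no coordinate in $(0,\epsilon]$, and let it be the Gumbel kernel otherwise. Since $p^c_\epsilon/\|p^c_\epsilon\|_1$ never has such a coordinate, $\tilde\kappa=\kappa_0$ and $\tilde\bbP_1=\bbP_0$, so $\tilde\bbP_1(T=1)\le\delta$. Yet if $\xi$ is the symmetrisation of $(1/2,1/2-\eta,\eta)$ with $\eta\le\epsilon$, then $\bbP_1$ is detectable (e.g.\ by power-law detection, \cref{thm:upper}, since $\Gh(P^c_\epsilon)\approx 1$). Hence $\tilde\bbP_1(T=1)\ge 1-4\delta$ is false for $\delta<1/5$.

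Repairing the construction does not rescue a total-variation transfer either. Suppose you keep $\kappa$ on frequent tokens and only re-randomise which rare token is output. When the first term of \cref{eq:lower} fails you only know $2n\E[\Go(P_\epsilon)]<1$, so the TV distance need not be small. Conditioning is the right idea, but the union bound does not give the complement probability $\ge 1/2$; you need the product structure. The event $E^n=\cap_t\{P_t(A_t)>\epsilon\}$ has the same probability under $\bbP_0$ and $\bbP_1$, because $A_t\,|\,P_t\sim P_t$ under both. That probability is $(1-\E[\sum_a P_\epsilon(a)])^n\ge(1-2\E[\Go(P_\epsilon)])^n\ge(1-1/n)^n\ge 1/4$.

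Conditioning \emph{both} hypotheses on $E^n$ gives conditional errors at most $4\delta$ each; this is where $8\delta$, and then $16\delta$, come from. The conditional laws are again $n$-fold products, so no $\tilde\bbP_1$ is needed. But the single-step law you must analyse is $\bbQ_1(U,A\in\cdot\,|\,E)$, whose $U$-density given $A=a$ is $d\,\E[\kappa(a|u,P)\sind(P(a)>\epsilon)]/\bbQ_1(E)$. That is not the law your $\tilde\kappa$ produces.

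\textbf{Step 2, the crux, is only a plan, and the sketched route does not produce the square.} A pointwise bound in $p$ followed by averaging over $P$ is exactly the convexity argument you yourself note is too weak. The missing idea has three parts:
\begin{enumerate}
\item Pass to $\chi^2$, which upper bounds both $\KL$ and the squared Hellinger distance.
\item Expand $\int_0^1 g(u)^2\,\d{u}$, where $g$ is the mixture density above, as a double expectation over independent copies $P,Q\sim\xi$.
\item Use $\int_0^1\kappa(a|u,p)\kappa(a|u,q)\,\d{u}\le\min(p(a),q(a))$ (both functions lie in $[0,1]$ and integrate to $p(a)$ and $q(a)$), then $\min(p,q)-pq\le\sqrt{p(1-p)\,q(1-q)}$, and finally independence of $P$ and $Q$.
\end{enumerate}
With $\bbQ_1(E)\ge 1/4$ this gives $\chi^2\le 16d^2\,\E[\sqrt{P^c_\epsilon(a)(1-P^c_\epsilon(a))}]^2$. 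Since $\bbQ_1(A=a|E)=1/d$ by symmetry, tensorising yields $\KL\le 16n\,\E[\Gh(P^c_\epsilon)]^2$, and Bretagnolle--Huber finishes as in your Step 3.
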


\begin{proof}[Proof of \cref{thm:lower}]
We proceed in three steps.

\paragraph{Step 1: Setup and entropy inequalities}
Suppose that $\epsilon \in (0,1/2)$ and \cref{eq:lower} does not hold. 
Then $\E[\Go(P_\epsilon)] < \frac{1}{2n}$.
Let $E^n = \cap_{t=1}^n \{P_t(A_t) > \epsilon\}$.
Note that the law of $E^n$ is the same under both $\bbQ_0$ and $\bbQ_1$.
Let $P$ have law $\xi$ and $A \sim P$. Then,
\begin{align}
\bbQ^n_0(E^n)
&= \bbQ^n_0(P(A) > \epsilon)^n 
= (1 - \bbQ^n_0(P(A) \leq \epsilon))^n \nonumber \\
&= \left(1 - \E\left[\sum_{a \in \Sigma} P(a) \sind(P(a) \leq \epsilon)\right]\right)^n 
\geq \left(1 - 2\E\left[\Go(P_\epsilon)\right]\right)^n 
\geq \left(1 - \frac{1}{n}\right)^n \geq \frac{1}{4} \,, 
\label{eq:low}
\end{align}
where the second inequality follows from the assumption that $\E[\Go(P_\epsilon)] < 1/(2n)$ and the first
from the fact that $\epsilon \leq 1/2$ so that 
\begin{align*}
\Go(P_\epsilon) = \sum_{a : p(a) \leq \epsilon} p(a)(1-p(a)) 
\geq \sum_{a : p(a) \leq \epsilon} p(a) / 2 \,.
\end{align*}
From \cref{eq:low} it follows that $\bbQ^n_0(T = 1 | E^n) = \bbQ^n_0(T = 1, E^n) / \bbQ_0(E^n) \leq 4\delta$.
Similarly, $\bbQ^n_1(T = 0 | E^n) \leq 4\delta$.
Therefore, by the Bretagnolle--Huber inequality,
\begin{align*}
8 \delta 
&\geq \bbQ^n_0(T = 1 | E^n) + \bbQ^n_1(T = 0 | E^n) 
\geq \frac{1}{2} \exp\left(-\KL\right) \,,
\end{align*}
where $\KL = \KL(\bbQ_1^n(U_1,A_1,\ldots,U_n,A_n \in \cdot | E^n), \bbQ_0^n(U_1,A_1,\ldots,U_n,A_n \in \cdot | E^n))$.
Now $\bbQ^n_k(\cdot | E^n)$ is a product measure for both $k = 0$ and $k = 1$.
Therefore, abbreviating $U = U_1$ and $A = A_1$ and $\lambda = \cU([0,1])$ and $E = \{P(A) > \epsilon\}$,
\begin{align}
\KL
&= n \KL(\bbQ_1(U,A \in \cdot | E), \bbQ_0(U,A \in \cdot | E)) \nonumber \\
&= n \sum_{a \in \Sigma} \bbQ_1(A = a | E) \KL(\bbQ_1(U \in \cdot | A = a, E), \bbQ_0(U \in \cdot | A = a, E)) \nonumber \\
&= n \sum_{a \in \Sigma} \bbQ_1(A = a | E) \KL(\bbQ_1(U \in \cdot | A = a, E), \lambda)  \nonumber \\
&\leq \frac{n}{d} \sum_{a \in \Sigma} \chi^2(\bbQ_1(U \in \cdot | A = a, E), \lambda) \,,\
\label{eq:kl-1}
\end{align}
where the inequality follows from \cref{lem:chi} and because $\bbQ_1(A = a | E) = 1/d$ by the symmetry assumption.

\paragraph{Step 2: Bounding the chi-squared divergence}
We start by introducing a variety of conditional measures (regular versions):
\begin{itemize}
\item $\xi(\cdot|a) = \bbQ_1(P = \cdot | A = a, E)$.
\item $\mu(\cdot|a) = \bbQ_1(U = \cdot | A = a, E)$.
\item $\mu(\cdot|a, p) = \bbQ_1(U = \cdot | P, A = a, E)$ evaluated at $p$.
\end{itemize}
By Bayes' law, when $p(a) > \epsilon$, then 
\begin{align}
\frac{\d \xi(\cdot|a)}{\d \xi}(p) &= \frac{d p_\epsilon^c(a)}{\bbQ_1(E)}\,. \label{eq:change:xi} 
\end{align}
Next, if $p(a) > \epsilon$, then
\begin{align}
\frac{\d \mu(\cdot|a,p)}{\d \lambda}(x) = \frac{\kappa(a | x, p)}{\int_0^1 \kappa(a | y, p) \d{y}} \leq \frac{1}{p_\epsilon^c(a)} \,.
\label{eq:min}
\end{align}
Let $Q$ be independent of $P$ and have law $\xi$.
Then:
\begin{align*}
\chi^2(\mu(\cdot | a), \lambda)
&= \int_0^1 \left(\frac{\d{\mu(\cdot | a)}}{\d{\lambda}}(x) - 1\right)^2 \d{x} \\
&= \int_0^1 \frac{\d{\mu(\cdot|a)}}{\d{\lambda}}(x)^2 \d{x} - 1 \\
&\exl{a}= \int_0^1 \left(\int_{\Delta(\Sigma)} \frac{\d{\mu(\cdot|a,p)}}{\d{\lambda}}(x) \d{\xi(p|a)}\right)^2 \d{x} - 1 \\
&= \int_{\Delta(\Sigma)} \int_{\Delta(\Sigma)} \left[\int_0^1 \frac{\d{\mu(\cdot|a,p)}}{\d{\lambda}}(x) \frac{\d{\mu(\cdot|a,q)}}{\d{\lambda}}(x) \d{x}\right] \d{\xi(p|a)} \d{\xi(q|a)} - 1 \\
&\exl{b}\leq \int_{\Delta(\Sigma)}\int_{\Delta(\Sigma)} \min\left(\frac{1}{p_\epsilon^c(a)}, \frac{1}{q_\epsilon^c(a)}\right) \d{\xi(p|a)} \d{\xi(q|a)} - 1 \\ 
&= \int_{\Delta(\Sigma)}\int_{\Delta(\Sigma)} \min\left(\frac{1 - p_\epsilon^c(a)}{p_\epsilon^c(a)}, \frac{1-q_\epsilon^c(a)}{q_\epsilon^c(a)}\right) \d{\xi(p|a)} \d{\xi(q|a)}  \\ 
&\exl{c}= \frac{d^2}{\bbQ_1(E)^2} \int_{\Delta(\Sigma)} \int_{\Delta(\Sigma)} \min\left(\frac{1-p_\epsilon^c(a)}{p_\epsilon^c(a)}, \frac{1-q_\epsilon^c(a)}{q_\epsilon^c(a)}\right) p_\epsilon^c(a) q_\epsilon^c(a) \d{\xi(p)} \d{\xi(q)}  \\
&= \frac{d^2}{\bbQ_1(E)^2} \E\left[\min\left(Q_\epsilon^c(a)(1-P_\epsilon^c(a)), P_\epsilon^c(a)(1-Q_\epsilon^c(a))\right)\right] \\
&\exl{d}\leq 16d^2 \E\left[\min\left(P_\epsilon^c(a)(1 - P_\epsilon^c(a)),\, Q_\epsilon^c(a)(1 - Q_\epsilon^c(a))\right)\right] \\
&\exl{e}\leq 16d^2 \E\left[\sqrt{P_\epsilon^c(a)(1 - P_\epsilon^c(a))} \right]^2 \,.
\end{align*}
where \ex{a} follows from the definition of $\mu(\cdot|a)$, $\mu(\cdot|a,p)$ and $\xi(p|a)$,
\ex{b} from \cref{eq:min},
\ex{c} from \cref{eq:change:xi},
\ex{d} since $\min(p(1-q), q(1-p)) \leq \min(p(1-p), q(1-q))$ and $\bbQ_0(E) \geq \bbQ_0^n(E^n) \geq 1/4$ by \cref{eq:low}. Finally,
\ex{e} holds because $\min(p(1-p), q(1-q)) \leq \sqrt{p(1-p)q(1-q)}$.

\paragraph{Step 3: Finishing up}

Combining the result in the previous step with \cref{eq:kl-1},
\begin{align*}
\KL
&\leq 16nd \sum_{a=1}^d \E\left[\sqrt{P_\epsilon^c(a)(1 - P_\epsilon^c(a))}\right]^2 
= 16n \E\left[\sum_{a=1}^d \sqrt{P_\epsilon^c(a)(1 - P_\epsilon^c(a))}\right]^2 
= 16n \E\left[\Gh(P_\epsilon^c)\right]^2 \,.
\end{align*}
Therefore
$16\delta \geq \exp(-16n \E[\Gh(P_\epsilon^c)]^2)$,
which implies that
\begin{align*}
n \geq \frac{\log\left(\frac{1}{16\delta}\right)}{16 \E\left[\Gh(P_\epsilon^c)\right]^2}\,,
\end{align*}
which contradicts our assumption that \cref{eq:lower} does not hold.
\end{proof}

\newcommand{\Nupper}{N_{\textsc{upper}}}
\newcommand{\Nlower}{N_{\textsc{lower}}}
\newcommand{\Nopt}{N_{\textsc{opt}}}

\section{Comparing upper and lower bounds}
The upper and lower bounds are nearly matching in the i.i.d.\ setting of \cref{ex:iid}.
The upper bound shows that detection occurs with high probability with Gumbel watermarking and power law detection after observing $n \geq \Nupper$ tokens where
$\Nupper$ is the smallest natural number such that
\begin{align*}
\Nupper \approx \min\left(\frac{\sqrt{-\log \E[\Go(P_\epsilon)]}}{\E[\Go(P_\epsilon)]}, \frac{-\log \E[\Gh(P_\epsilon^c)]}{\E[\Gh(P_\epsilon^c)]^2}\right) \log(1/\delta)
\end{align*}
where $\epsilon = \log(1/\delta) / \Nupper \leq 1/2$ with the inequality holding by the assumption in \cref{ex:iid} that $\E[\Gh(P)]$ is sufficiently small.
The lower bound (\cref{thm:lower}) shows that detection cannot occur unless $n \geq \Nlower$ where
\begin{align*}
\Nlower
\approx \min\left(\frac{1}{\E[\Go(P_\epsilon)]},\, \frac{\log(1/\delta)}{\E[\Gh(P_\epsilon^c)]^2}\right) \,.
\end{align*}
Note that the lower bound holds for any $\epsilon \in (0,1/2)$, and in particular for $\epsilon = \log(1/\delta) / \Nupper$.
Hence the upper and lower bounds are separated by logarithmic factors.

\section{Discussion}\label{sec:disc}

\paragraph{Experiments}
You surely noticed the absence of experiments.
Our brief tests showed the power law detector indeed outperforms the exponential detector in synthetically crafted examples where 
the theory says it should (e.g., \cref{ex:const}).
Unfortunately, however, its performance on language data seems to be slightly worse, presumably due to the additional logarithmic factor and/or constant factors.
There may be ways to improve the proposed statistic to get the best of both worlds. Most naively one can combine both detectors with a union bound, but better methods
might be possible.

\paragraph{Detecting non-watermarked text}
\cref{alg:detect} either rejects the null hypothesis and returns \keyword{watermarked} or returns
\keyword{unknown}. The latter result does not correspond to a high-probability guarantee that the text is not
watermarked. Such an assertion is not possible in the standard frequentist setting considered here because the signal
provided by watermarking can be arbitrarily small when the entropy of the conditional distributions $(P_t)$ is small.
However, if there is a known (probabilistic) lower bound on $\sum_{t=1}^n G_\epsilon(P_t)$ under $\bbP_1$, then it is possible
to accept the null hypothesis if $\sum_{t=1}^n S_t$ does not grow suitably fast.

\paragraph{Intuition for statistic}
The intuition behind the definition of $S$ in \cref{eq:S} is as follows.
The key statistical approach is based on testing whether or not $(V_t)_{t=1}^n$ could reasonably be sampled from
a uniform distribution, which falls into the goodness-of-fit class of statistical tests \citep{d2017goodness}.
Many of these tests are based on some kind of comparison between the empirical CDF of $(V_t)$ and the CDF of the uniform
distribution, which for $x \in [0,1]$ is $F(x) = x$. The empirical CDF is $F_n(x) = \frac{1}{n} \sum_{t=1}^n \sind(V_t \leq x)$.
Classical statistics include the Kolmogorov--Smirnov and Anderson--Darling statistics:
\begin{align*}
\sup_{x \in [0,1]} \left(F(x) - F_n(x)\right)^2 \quad \text{and} \quad \int_0^1 \left(\frac{F(x) - F_n(x)}{\sqrt{x(1-x)}}\right)^2 \d{x} \,.
\end{align*}
These statistics cannot be written as sums of functionals of $V_t$. Moreover, the Kolmogorov--Smirnov statistic is not making
use of the fact that under the null hypothesis the variance of $F(x) - F_n(x)$ is $x(1-x)/n$.
This suggests a weighted Kolmogorov--Smirnov statistic:
\begin{align}
\sup_{x \in (0,1)} \frac{F(x) - F_n(x)}{\sqrt{x(1-x)}} \,,
\label{eq:wks}
\end{align}
which was introduced and analysed by \cite{AD52} as a goodness-of-fit test. \cite{DJ04} also studied this statistic in the context of multiple hypotheses
as we do here, but with a less complex structure for the alternative.
As it turns out, this statistic would yield about the same theory as our choice.
The statistic used here is based on the observation that $F(x) - F_n(x)$ is expected to be large for values of $x \in [1/2,1]$,
so the $\sqrt{x}$ term in the denominator has little impact. Moreover, rather than taking a supremum we average `uniformly' on buckets
$[0,1/2],[1/2,3/4],[3/4,7/8],\ldots$, which corresponds to
\begin{align*}
\int_0^{1-\epsilon} \frac{F(x) - F_n(x)}{(1-x)^{3/2}} \d{x} = \frac{2}{n} \sum_{t=1}^n \min\left(\frac{1}{\sqrt{\epsilon}}\,, \frac{1}{\sqrt{1 - V_t}}\right) - 4 + 2\sqrt{\epsilon} \,,
\end{align*}
which except for a scale and a shift is the proposed statistic.
The truncation is needed to ensure the statistic has well-controlled tails under the null hypothesis.

\paragraph{Weighted detectors}
The proposed detection mechanism is entirely independent of the model.
Detection can be improved enormously given access to a low-fidelity surrogate model.
Consider the situation in \cref{ex:low} where most tokens are nearly deterministic.
If the detector has access to a classifier $\phi : \Sigma^* \to \{0, 1\}$ such that
$\phi(A_1,\ldots,A_{t-1}) = 0$ if $P_t$ is close to a Dirac, then a better test statistic
is $\sum_{t=1}^n \phi(A_1,\ldots,A_{t-1})S_t$, which can then be compared to the critical
value for sample size $\sum_{t=1}^n \phi(A_1,\ldots,A_{t-1})$.
Assuming that $\phi$ is accurate, then this approach reduces the detection time for \cref{ex:low} from $\tilde O(1/\beta^2)$ to $\tilde O(1/\beta)$.
Alternatively, $\phi$ can be real-valued (and even learned). Concretely, suppose that $\phi : \Sigma^* \to [0,1]$ and detection has the form
\begin{align*}
T = \sind\left(\sum_{t=1}^n \phi(A_1,\ldots,A_{t-1}) S_t \geq \tau_\star\right)\,,
\end{align*}
where $S_t = S(V_t)$ is defined as in \cref{eq:S}.
Since $\alpha_t = \phi(A_1,\ldots,A_{t-1})$ only depends on $(A_t)$, under the null hypothesis $(\alpha_t)$ and $(S_t)$ are independent. Hence, you can
still use Monte-Carlo to calculate a data-dependent critical threshold $\tau_\star$ such that
\begin{align*}
\bbP\left(\sum_{t=1}^n \alpha_t S_t \geq \tau_\star \bigg|(\alpha_t)_{t=1}^n\right) = \delta\,.
\end{align*}
Theoretically, by repeating the argument in \cref{lem:null}, with $\bbP_0$-probability at least $1 - \delta$,
\begin{align*}
\sum_{t=1}^n \alpha_t S_t = \tilde O\left(\sqrt{\sum_{t=1}^n \alpha_t^2} + \frac{1}{\sqrt{\epsilon}}\right) \,.
\end{align*}
Repeating also the proof of \cref{lem:S} shows that with $\bbP_1$-probability at least $1 - \delta$,
\begin{align*}
\sum_{t=1}^n \alpha_t S_t = \Omega\left(\sum_{t=1}^n \alpha_t G_\epsilon(P_t)\right) - \tilde O\left(\frac{1}{\sqrt{\epsilon}} + \sqrt{\sum_{t=1}^n \alpha_t^2}\right) \,.
\end{align*}
This is a complicated expression to optimise because $G_\epsilon(p)$ depends on $\epsilon$. A reasonable approximation in many practical scenarios is
$G_\epsilon(p) \approx \Gh(p)$ for some constant $\epsilon$ (e.g., $\epsilon = 1/10$). Whenever this approximation is reasonable and $\alpha_t = \Gh(P_t)$, then detection occurs once
\begin{align*}
\sum_{t=1}^n \Gh(P_t)^2 = \tilde \Omega(1) \,.
\end{align*}
Of course, this choice of $\alpha_t$ depends on $\Gh(P_t)$ that is generally not known. An interesting question is how to choose $\alpha_t$ robustly when $\Gh(P_t)$ can only
be estimated; and also in the case that $G_\epsilon(P_t) \approx \Gh(P_t)$ is a poor approximation.

\paragraph{Asymptotics of the critical statistic}
The Lindeberg--Feller central limit theorem provides an asymptotic approximation of $\tau_\star$ defined in \cref{eq:opt}.
When $U$ has law $\cU([0,1])$, then
\begin{align*}
\E[S(U)] = 0 \quad \text{and} \quad
\E[S(U)^2] = \log(1/\eps) - O(1) \,.
\end{align*}
Hence, with $\eps = \log(1/\delta) / n$ the central limit theorem approximation of $\tau_\star$ for large $n$ is
\begin{align*}
\tau_\star \sim \Phi^{-1}(1 - \delta) \sqrt{n \log n} \,,
\end{align*}
where $\Phi$ is the cumulative distribution function of the standard Gaussian.

\paragraph{Non-optimality of subexponential statistics}
A random variable $X$ is called $\sigma$-subexponential if $\E[\exp(|X/\sigma|)] \leq 2$ \citep{Ver18}.
Our choice of $S_t$ is a truncated power law under the null hypothesis, which is not subexponential for any $\sigma = O(1)$.
On the other hand, \cite{AAR23} chose $S_t$ to be exponentially distributed
and \cite{dathathri2024scalable} essentially chose the statistic to be Gaussian (more precisely, a convolution of Bernoulli's).
Both of these choices are $\sigma$-subexponential with $\sigma = \Theta(1)$.
Suppose that the statistics $S_t$ are scaled and shifted so that under the null hypothesis $\E_0[S_t] = 0$ and $\E_0[S_t^2] = 1$ and $\E_0[\exp(|S_t/\sigma|)] \leq 2$
for some $\sigma = O(1)$.
We argue under these conditions that using a detector of the form
\begin{align*}
T = \sind\left(\sum_{t=1}^n S_t \geq \tau\right)
\end{align*}
cannot lead to near-optimal detection.
To see why, under the alternative hypothesis, using \cref{lem:fench},
\begin{align*}
\E_1[S_t|P_t] 
= \sum_{a \in \Sigma} \E_1[S_t \sind(A_t = a)|P_t] 
\leq \frac{5}{2} \sigma \sum_{a \in \Sigma} h(P_t(a)) 
\leq 5 \sigma H(P_t)\,,
\end{align*}
where $h(p) = -p \log(p) - (1-p) \log(1-p)$ is the binary entropy.
Since we assumed that $\E_0[S_t] = 0$ and $\E_0[S_t^2] = 1$, the central limit theorem suggests the critical threshold satisfies $\tau_\star = \tilde \Theta(\sqrt{n})$,
it follows that detection occurs only when $\sigma \sum_{t=1}^n H(P_t) = \tilde \Omega(\sqrt{n})$, which when $\sigma = O(1)$ implies that $n = \tilde \Omega(1/\bar H^2)$.
We showed in \cref{ex:const}, however, that the power law statistic yields detection once
$n = \tilde \Omega(1 / \bar H)$.
\cite{tsur2025heavywater} also noted the importance of using a heavier-tailed statistic for more efficient detection.

\paragraph{Logarithmic factors}
There is a logarithmic factor gap between the upper and lower bounds.
The upper bound is probably not improvable for the proposed power law detection scheme.
In certain cases (see \cref{ex:low}) the bound on the detection time
for the newly proposed detection scheme is worse than the bound provided by \cite{AAR23}.
Since the watermarking scheme is the same for both, one can of course combine the tests proposed here
with that suggested by \cite{AAR23} at the cost of a union bound.

\paragraph{Acknowledgements}
Many thanks to Marcus Hutter and Andr\'as Gy\"orgy for useful discussion and suggestions.

\bibliographystyle{abbrvnat}
\bibliography{all}

@article{huang2023towards,
  title={Towards optimal statistical watermarking},
  author={Huang, Baihe and Zhu, Hanlin and Zhu, Banghua and Ramchandran, Kannan and Jordan, Michael I and Lee, Jason D and Jiao, Jiantao},
  journal={arXiv preprint arXiv:2312.07930},
  year={2023}
}

@article{DJ04,
author = {David Donoho and Jiashun Jin},
title = {{Higher criticism for detecting sparse heterogeneous mixtures}},
volume = {32},
journal = {The Annals of Statistics},
number = {3},
publisher = {Institute of Mathematical Statistics},
pages = {962 -- 994},
year = {2004},
}

@article{ji2025overview,
  title={An overview of large language models for statisticians},
  author={Ji, Wenlong and Yuan, Weizhe and Getzen, Emily and Cho, Kyunghyun and Jordan, Michael I and Mei, Song and Weston, Jason E and Su, Weijie J and Xu, Jing and Zhang, Linjun},
  journal={arXiv preprint arXiv:2502.17814},
  year={2025}
}

@article{li2025robust,
  title={Robust detection of watermarks for large language models under human edits},
  author={Li, Xiang and Ruan, Feng and Wang, Huiyuan and Long, Qi and Su, Weijie J},
  journal={Journal of the Royal Statistical Society Series B: Statistical Methodology},
  pages={qkaf056},
  year={2025},
  publisher={Oxford University Press UK}
}

@article{huang2026towards,
  title={Towards Anytime-Valid Statistical Watermarking},
  author={Huang, Baihe and Xu, Eric and Ramchandran, Kannan and Jiao, Jiantao and Jordan, Michael I},
  journal={arXiv preprint arXiv:2602.17608},
  year={2026}
}

@inproceedings{tsur2025heavywater,
  title={HeavyWater and SimplexWater: Distortion-free LLM Watermarks for Low-Entropy Distributions},
  author={Tsur, Dor and Long, Carol Xuan and Verdun, Claudio Mayrink and Vithana, Sajani and Hsu, Hsiang and Chen, Chun-Fu and Permuter, Haim H and Calmon, Flavio},
  booktitle={The Thirty-ninth Annual Conference on Neural Information Processing Systems},
  year={2025}
}

@article{he2025empirical,
  title={On the Empirical Power of Goodness-of-Fit Tests in Watermark Detection},
  author={He, Weiqing and Li, Xiang and Shang, Tianqi and Shen, Li and Su, Weijie and Long, Qi},
  journal={arXiv preprint arXiv:2510.03944},
  year={2025}
}

@article{AD52,
  title={Asymptotic theory of certain" goodness of fit" criteria based on stochastic processes},
  author={Anderson, Theodore W and Darling, Donald A},
  journal={The annals of mathematical statistics},
  pages={193--212},
  year={1952},
  publisher={JSTOR}
}

@book{d2017goodness,
  title={Goodness-of-fit-techniques},
  author={D'Agostino, RalphB},
  year={2017},
  publisher={Routledge}
}

@book{Ver18,
  author =        {R. Vershynin},
  publisher =     {Cambridge University Press},
  title =         {High-dimensional probability: An introduction with
                   applications in data science},
  year =          {2018},
}

@article{liu2024survey,
  title={A survey of text watermarking in the era of large language models},
  author={Liu, Aiwei and Pan, Leyi and Lu, Yijian and Li, Jingjing and Hu, Xuming and Zhang, Xi and Wen, Lijie and King, Irwin and Xiong, Hui and Yu, Philip},
  journal={ACM Computing Surveys},
  volume={57},
  number={2},
  pages={1--36},
  year={2024},
  publisher={ACM New York, NY}
}

@book{Tsy08,
	Author = {Tsybakov, A. B.},
	Publisher = {Springer Science \& Business Media},
	Title = {Introduction to nonparametric estimation},
	Year = {2008}}

@book{LS20book,
  title={Bandit algorithms},
  author={T. Lattimore and Cs.\ Szepesv{\'a}ri},
  year={2020},
  publisher={Cambridge University Press}
}

@article{li2025statistical,
  title={A statistical framework of watermarks for large language models: Pivot, detection efficiency and optimal rules},
  author={Li, Xiang and Ruan, Feng and Wang, Huiyuan and Long, Qi and Su, Weijie J},
  journal={The Annals of Statistics},
  volume={53},
  number={1},
  pages={322--351},
  year={2025},
  publisher={Institute of Mathematical Statistics}
}

@book{BLM13,
	Author = {Boucheron, S. and Lugosi, G. and Massart, P.},
	Publisher = {OUP Oxford},
	Title = {Concentration inequalities: A nonasymptotic theory of independence},
	Year = {2013}}

@article{AAR23,
  author = {S. Aaronson},
  title={Watermarking {LLM}s},
  journal={online},
  year = {2022},
  url = {https://www.scottaaronson.com/talks/watermark.ppt}
}

@inproceedings{fernandez2023three,
  title={Three bricks to consolidate watermarks for large language models},
  author={Fernandez, Pierre and Chaffin, Antoine and Tit, Karim and Chappelier, Vivien and Furon, Teddy},
  booktitle={2023 IEEE international workshop on information forensics and security (WIFS)},
  pages={1--6},
  year={2023},
  organization={IEEE}
}

@article{tsur2025optimized,
  title={Optimized Couplings for Watermarking Large Language Models},
  author={Tsur, Dor and Long, Carol Xuan and Verdun, Claudio Mayrink and Hsu, Hsiang and Permuter, Haim and Calmon, Flavio P},
  journal={arXiv preprint arXiv:2505.08878},
  year={2025}
}

@article{liu2023unforgeable,
  title={An unforgeable publicly verifiable watermark for large language models},
  author={Liu, Aiwei and Pan, Leyi and Hu, Xuming and Li, Shu'ang and Wen, Lijie and King, Irwin and Yu, Philip S},
  journal={arXiv preprint arXiv:2307.16230},
  year={2023}
}

@article{zhao2023provable,
  title={Provable robust watermarking for ai-generated text},
  author={Zhao, Xuandong and Ananth, Prabhanjan and Li, Lei and Wang, Yu-Xiang},
  journal={arXiv preprint arXiv:2306.17439},
  year={2023}
}

@inproceedings{piet2025markmywords,
  title={MARKMyWORDS: Analyzing and Evaluating Language Model Watermarks},
  author={Piet, Julien and Sitawarin, Chawin and Fang, Vivian and Mu, Norman and Wagner, David},
  booktitle={2025 IEEE Conference on Secure and Trustworthy Machine Learning (SaTML)},
  pages={68--91},
  year={2025},
  organization={IEEE}
}

@article{kuditipudi2023robust,
  title={Robust distortion-free watermarks for language models},
  author={Kuditipudi, Rohith and Thickstun, John and Hashimoto, Tatsunori and Liang, Percy},
  journal={arXiv preprint arXiv:2307.15593},
  year={2023}
}

@InProceedings{kirchenbauer23a,
  title = 	 {A Watermark for Large Language Models},
  author =       {Kirchenbauer, John and Geiping, Jonas and Wen, Yuxin and Katz, Jonathan and Miers, Ian and Goldstein, Tom},
  booktitle = 	 {Proceedings of the 40th International Conference on Machine Learning},
  pages = 	 {17061--17084},
  year = 	 {2023},
  editor = 	 {Krause, Andreas and Brunskill, Emma and Cho, Kyunghyun and Engelhardt, Barbara and Sabato, Sivan and Scarlett, Jonathan},
  volume = 	 {202},
  series = 	 {Proceedings of Machine Learning Research},
  month = 	 {23--29 Jul},
  publisher =    {PMLR},
}

@inproceedings{christ2024undetectable,
  title={Undetectable watermarks for language models},
  author={Christ, Miranda and Gunn, Sam and Zamir, Or},
  booktitle={The Thirty Seventh Annual Conference on Learning Theory},
  pages={1125--1139},
  year={2024},
  organization={PMLR}
}

@article{dathathri2024scalable,
  title={Scalable watermarking for identifying large language model outputs},
  author={Dathathri, Sumanth and See, Abigail and Ghaisas, Sumedh and Huang, Po-Sen and McAdam, Rob and Welbl, Johannes and Bachani, Vandana and Kaskasoli, Alex and Stanforth, Robert and Matejovicova, Tatiana and others},
  journal={Nature},
  volume={634},
  number={8035},
  pages={818--823},
  year={2024},
  publisher={Nature Publishing Group UK London}
}

\appendix

\section{Technical inequalities}

\begin{lemma}\label{lem:fench}
Suppose that $\sigma > 0$ and $X$ is a random variable with $\E[X] = 0$ and $\E[\exp(|X/\sigma|)] \leq 2$ and $E \in \{0,1\}$ be
such that $\bbP(E = 1) = p$. Then 
\begin{align*}
\E[XE] \leq \frac{1 + \log(2)}{\log(2)} \sigma h(p) \,.
\end{align*}
where $h(p) = -p \log p - (1-p) \log(1-p)$ is the binary entropy.
\end{lemma}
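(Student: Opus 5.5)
We use the variational (Donsker--Varadhan/Fenchel--Young) inequality $ab \leq e^{a} + b\log b - b$, valid for $b \geq 0$ and real $a$, in the form: for any $\lambda > 0$,
\begin{align*}
\E[XE] \leq \frac{\log \E[\exp(\lambda X) \mid E = 1]}{\lambda}\, p,
\end{align*}
but it is cleaner to work with the change-of-measure bound directly. Let $\bbQ = \bbP(\cdot \mid E = 1)$, so that $\frac{\d{\bbQ}}{\d{\bbP}} = E/p$ and $\KL(\bbQ, \bbP) = \log(1/p)$. By the Donsker--Varadhan formula, for any $\lambda > 0$,
\begin{align*}
\E[X \mid E=1] \leq \frac{1}{\lambda}\left(\log(1/p) + \log \E[\exp(\lambda X)]\right).
\end{align*}
Taking $\lambda = 1/\sigma$ and using $\E[\exp(X/\sigma)] \leq \E[\exp(|X/\sigma|)] \leq 2$ gives
\begin{align*}
\E[XE] = p\,\E[X \mid E = 1] \leq \sigma\, p \left(\log(1/p) + \log 2\right).
\end{align*}

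This already handles small $p$, but it fails near $p = 1$, where the binary entropy $h(p)$ vanishes. To cover that regime we use $\E[X] = 0$, which gives $\E[XE] = -\E[X(1-E)] = \E[(-X)(1-E)]$. Applying the same argument to $-X$, which satisfies the same exponential moment bound, and to the event $\{E = 0\}$ of probability $1-p$, yields
\begin{align*}
\E[XE] \leq \sigma (1-p)\left(\log(1/(1-p)) + \log 2\right).
\end{align*}

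It therefore remains to show the elementary inequality
\begin{align*}
\min\bigl(p\log(1/p) + p\log 2,\ (1-p)\log(1/(1-p)) + (1-p)\log 2\bigr) \leq \frac{1 + \log 2}{\log 2}\, h(p).
\end{align*}
By the symmetry $p \leftrightarrow 1-p$ we may assume $p \leq 1/2$ and use the first term. Since $h(p) \geq p\log(1/p)$, it suffices to bound $p\log 2$ by a multiple of $h(p)$. For $p \leq 1/2$ we have $\log(1/p) \geq \log 2$, so $h(p) \geq p\log(1/p) \geq p\log 2$, and hence $p\log 2 \leq h(p)$. This gives the total bound $2h(p)$, which is at most $\frac{1+\log 2}{\log 2}\, h(p) \approx 2.44\, h(p)$, so the stated constant holds with room to spare.

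The only potential obstacle is justifying the Donsker--Varadhan step. This is standard: it follows from Jensen's inequality applied to $\log \E_{\bbQ}\bigl[\exp(\lambda X)\, \tfrac{\d{\bbP}}{\d{\bbQ}}\bigr]$, or equivalently from the Fenchel inequality $xy \leq e^{x} + y\log y - y$ with $x = X/\sigma$ and $y = E/(p\cdot\text{const})$.
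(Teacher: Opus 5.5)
Your proof is correct and follows essentially the same route as the paper. Both arguments use a change-of-measure (entropy-duality) bound on $\E[XE]$ for the event $\{E=1\}$, apply the same bound to $-X$ and $\{E=0\}$ via $\E[X]=0$, and then compare the minimum with $h(p)$ for $p \le 1/2$. The only difference is that you use the optimized Donsker--Varadhan form rather than the paper's raw Fenchel--Young inequality with $f(x)=x\log x - x$; this replaces the paper's additive $1$ by $\log 2$ and gives the slightly sharper constant $2$ instead of $\frac{1+\log 2}{\log 2}$.
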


\begin{proof}
Let $f(x) = x \log x - x$, which has Fenchel dual $f^\star(u) = \exp(u)$.
Hence, by Fenchel--Young's inequality,
\begin{align*}
\E[X E]
&\leq \E[|X| E] 
= p\sigma \E[|X/\sigma| E p] 
\leq p \sigma \left(\E[f^\star(|X/\sigma|)] + \E[f(E/p)]\right) 
\leq p \sigma \left(1 + \log(1/p)\right) \,.
\end{align*}
On the other hand, $\E[X E] = \E[X(E - 1)] \leq \E[|X|(1 - E)]$ and hence the same argument shows that
$\E[XE] \leq (1 - p) \sigma \left(1 + \log(1/(1-p))\right)$.
Therefore
\begin{align*}
\E[XE] 
&\leq \sigma\min\left(p(1 + \log(1/p)),\, (1 - p) (1 + \log(1/(1-p)))\right)  
\leq \frac{1 + \log(2)}{\log(2)} \sigma h(p) \,.
\qedhere
\end{align*}
\end{proof}

\begin{lemma}[\citealt{Tsy08}]\label{lem:chi}
Let $p$ and $q$ be probability measures with $p$ absolutely continuous with respect to $q$.
Then $\KL(p, q) \leq \chi^2(p,q)$.
\end{lemma}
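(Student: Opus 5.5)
The plan is to pass from the relative entropy to the $\chi^2$-divergence through Jensen's inequality applied to the concave logarithm, followed by the elementary bound $\log(1+x) \leq x$. Throughout, write $f = \frac{\d{p}}{\d{q}}$ for the Radon--Nikodym derivative, which exists by the absolute continuity assumption.

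First I will dispose of the degenerate case. If $\chi^2(p,q) = \infty$ there is nothing to prove, so assume $\chi^2(p,q) < \infty$, which means $f \in L^2(q)$. Expanding the square and using $\int f \d{q} = 1$ gives the identity
\begin{align*}
\chi^2(p,q) = \int f^2 \d{q} - 1 = \int f \d{p} - 1 \,.
\end{align*}

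Next I will apply Jensen's inequality under the probability measure $p$. Since $f > 0$ holds $p$-almost surely and $\log$ is concave,
\begin{align*}
\KL(p,q) = \int \log f \d{p} \leq \log \int f \d{p} = \log\left(1 + \chi^2(p,q)\right) \leq \chi^2(p,q) \,.
\end{align*}
The last step is just $\log(1+x) \leq x$ for $x \geq 0$.

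The only point requiring care is justifying Jensen when $\int \log f \d{p}$ might be ill-defined. I will handle this by noting that the positive part satisfies $\log^+ f \leq f$, and $f$ is $p$-integrable because $\int f \d{p} = \int f^2 \d{q} < \infty$. So $\int \log f \d{p}$ is well defined, possibly equal to $-\infty$, in which case the inequality is trivial. Jensen's inequality for concave functions then applies in its standard extended form. I expect no real obstacle beyond this integrability bookkeeping.
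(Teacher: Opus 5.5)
Your proof is correct and follows the standard argument behind the result the paper cites from \cite{Tsy08}; the paper gives no proof of its own. That argument is Jensen's inequality for the concave logarithm under $p$, giving $\KL(p,q) \leq \log(1 + \chi^2(p,q))$, followed by $\log(1+x) \leq x$. Your integrability bookkeeping is sound, although the case $\int \log f \d{p} = -\infty$ never arises: since $f \log f \geq -1/e$, one always has $\KL(p,q) \geq 0$.
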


\begin{lemma}[\citealt{LS20book}]\label{lem:conc}
Suppose that $(X_t)_{t=1}^n$ is a sequence of random variables adapted to a filtration $(\sF_t)_{t=1}^n$
and $\eta > 0$ is a constant such that $\eta |X_t|\leq 1$ almost surely.
Then, with probability at least $1 - \delta$,
\begin{align*}
\sum_{t=1}^n X_t \leq \sum_{t=1}^n \E[X_t|\sF_{t-1}] + \eta \sum_{t=1}^n \E[X_t^2|\sF_{t-1}] + \frac{\log(1/\delta)}{\eta} \,.
\end{align*}
\end{lemma}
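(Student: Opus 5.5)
The plan is the standard exponential-supermartingale (Chernoff/Freedman) argument. I will build a nonnegative supermartingale from the increments $X_t$ and then apply Markov's inequality once at time $n$.

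\textbf{Step 1 (elementary inequality).} For every real $y \leq 1$,
\begin{align*}
\exp(y) \leq 1 + y + y^2 .
\end{align*}
For $y \leq 0$ this follows from $\exp(y) \leq 1 + y + y^2/2$. For $y \in [0,1]$ it follows from the series bound
\begin{align*}
\exp(y) - 1 - y = \sum_{k \geq 2} \frac{y^k}{k!} \leq y^2 \sum_{k \geq 2} \frac{1}{k!} = (e-2)\, y^2 \leq y^2 .
\end{align*}
Since $\eta |X_t| \leq 1$ almost surely, I can apply this with $y = \eta X_t$. Taking conditional expectations and then using $1 + z \leq \exp(z)$ gives
\begin{align*}
\E[\exp(\eta X_t) | \sF_{t-1}] \leq 1 + \eta \E[X_t|\sF_{t-1}] + \eta^2 \E[X_t^2|\sF_{t-1}] \leq \exp\left(\eta \E[X_t|\sF_{t-1}] + \eta^2 \E[X_t^2|\sF_{t-1}]\right).
\end{align*}

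\textbf{Step 2 (supermartingale).} Define $M_0 = 1$ and
\begin{align*}
M_t = \exp\left(\sum_{s=1}^t \left[\eta X_s - \eta \E[X_s|\sF_{s-1}] - \eta^2 \E[X_s^2|\sF_{s-1}]\right]\right).
\end{align*}
The conditional expectations are $\sF_{s-1}$-measurable and can be pulled out, so Step 1 gives $\E[M_t|\sF_{t-1}] \leq M_{t-1}$. Hence $(M_t)$ is a nonnegative supermartingale, and by the tower rule $\E[M_n] \leq 1$.

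\textbf{Step 3 (Markov and rearrange).} Markov's inequality gives $\bbP(M_n \geq 1/\delta) \leq \delta$. On the complementary event, taking logarithms and dividing by $\eta > 0$ yields
\begin{align*}
\sum_{t=1}^n X_t < \sum_{t=1}^n \E[X_t|\sF_{t-1}] + \eta \sum_{t=1}^n \E[X_t^2|\sF_{t-1}] + \frac{\log(1/\delta)}{\eta},
\end{align*}
which is the claim.

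The only step needing care is Step 1. The boundedness is used only one-sidedly, through $\eta X_t \leq 1$, and the constant $e-2 \leq 1$ is what lets the variance term appear with coefficient $\eta$ rather than a larger multiple. Everything else is routine, provided the conditional expectations are well defined, which boundedness guarantees.
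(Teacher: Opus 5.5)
Your proof is correct and follows the standard argument behind this result. The paper gives no proof of its own and defers to \cite{LS20book}, where the inequality is obtained in the same way: $\exp(y) \leq 1 + y + y^2$ for $y \leq 1$ yields a nonnegative exponential supermartingale, and Markov's inequality is applied at time $n$.
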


Lastly we supply for completeness a proof of \cref{lem:cdf}.

\begin{proof}[Proof of \cref{lem:cdf}]
By definition, 
\begin{align*}
\bbP(V \leq x) 
&= \sum_{a=1}^d \bbP(U_a \leq x, A = a) 
= \sum_{a=1}^d \int_0^x \bbP\left(\frac{p(b)}{-\log(U_b)} < \frac{p(a)}{-\log(y)} \text{ for all } b \neq a\right) \d{y} \\
&= \sum_{a=1}^d \int_0^x \bbP\left(U_b < y^{p(b)/ p(a)} \text{ for all }b \neq a\right) \d{y} 
= \sum_{a=1}^d \int_0^x \prod_{b \neq a} y^{p(b) / p(a)} \d{y} \\
&= \sum_{a=1}^d \int_0^x y^{\sum_{b \neq a} p(b) / p(a)} \d{y}  
= \sum_{a=1}^d \int_0^x y^{1/p(a) - 1} \d{y} 
= \sum_{a=1}^d p(a) x^{1/p(a)} \,.
\qedhere
\end{align*}
\end{proof}

\end{document}